\newcolumntype{M}[1]{>{\centering\arraybackslash}m{#1}}
\LetLtxMacro{\oldtextsc}{\textsc}
\renewcommand{\textsc}[1]{\oldtextsc{\scalefont{1.25}#1}}
\definecolor{shadecolor}{gray}{0.9}
\DeclareRobustCommand{\parhead}[1]{\textbf{#1}~}
\newcounter{parcount}
\lstdefinestyle{mystyle}{
    commentstyle=\color{OliveGreen},
    numberstyle=\tiny\color{black!60},
    stringstyle=\color{BrickRed},
    basicstyle=\ttfamily\scriptsize,
    breakatwhitespace=false,
    breaklines=true,
    captionpos=b,
    keepspaces=true,
    numbers=none,
    numbersep=5pt,
    showspaces=false,
    showstringspaces=false,
    showtabs=false,
    tabsize=2
}
\newacronym{ELBO}{elbo}{evidence lower bound}
\newacronym[shortplural=vae]{VAE}{vae}{variational autoencoder}
\newacronym{GMVAE}{gmvae}{Gaussian mixture VAE}
\newacronym{GMM}{gmm}{Gaussian mixture model}
\newacronym{AU}{au}{active units}
\newacronym{PPCA}{ppca}{probabilistic principal component analysis}
\newacronym{KL}{kl}{Kullback-Leibler}
\newacronym{LDA}{lda}{latent Dirichlet allocation}
\newacronym{SVI}{svi}{stochastic variational inference}
\newacronym{ICNN}{icnn}{input convex neural network}
\newacronym{ML}{ml}{maximum likelihood}
\newacronym{MLE}{mle}{maximum likelihood estimate}
\newacronym{MCMC}{mcmc}{Markov chain Monte Carlo}
\newacronym{HMC}{hmc}{Hamiltonian Monte Carlo}
\newacronym{LBFGS}{l-bfgs}{limited-memory Broyden-Fletcher-Goldfarb-Shanno}
\newacronym{ADVI}{advi}{automatic differentiation variational inference}
\newacronym{NUTS}{nuts}{No-U-Turn sampler}
\newacronym{GLM}{glm}{generalized linear model}
\newacronym{IF}{if}{influence function}
\newacronym{PF}{pf}{Poisson factorization}
\newacronym{MI}{mi}{mutual information}
\newacronym[\glsshortpluralkey={rpm}]
{RPM}{rpm}{reweighted probabilistic model}
\newacronym{NDCG}{ndcg}{normalized discounted cumulative gain}
\newacronym{MAP}{map}{mean average precision}
\newacronym{IDVAE-MT}{idvae-mt}{identifiable VAE via monotone
transport maps}
\newacronym{IDGMVAE-MT}{idgmvae-mt}{identifiable Gaussian mixture VAE via monotone
transport maps}
\newacronym{IDSVAE-MT}{idsvae-mt}{identifiable sequential VAE via
monotone transport maps}
\newacronym{IDVAE}{lidvae}{latent-identifiable VAE}
\newacronym{IDGMVAE}{lidgmvae}{latent-identifiable Gaussian mixture VAE}
\newacronym{IDSVAE}{lidsvae}{latent-identifiable sequential VAE}
\newacronym{IDMVAE}{lidmvae}{latent-identifiable mixture VAE}
\pgfplotsset{compat=newest}
\pgfplotsset{plot coordinates/math parser=false}
\definecolor{hexcolor0xbfbfbf}{rgb}{0.749,0.749,0.749}
\tikzset{>=latex}
\tikzstyle{none}   = [inner sep=0pt]
\tikzstyle{line}   = [ thick, -, shorten <=1pt, shorten >=1pt ]
\tikzstyle{arrow}  = [ thick,  ->, shorten <=1pt, shorten >=1pt ]
\tikzstyle{ardash} = [ thick dotted, ->, shorten <=1pt, shorten >=1pt ]
\tikzstyle{empty}=[circle,opacity=0.0,text opacity=1.0,minimum width=4pt,minimum height=4pt]
\tikzstyle{box}=[rectangle,fill=White,draw=Black]
\tikzstyle{filled}=[circle,fill=hexcolor0xbfbfbf,draw=Black]
\tikzstyle{hollow}=[circle,fill=White,draw=Black]
\tikzstyle{param}=[rectangle,fill=Black,draw=Black,inner sep=0pt,minimum width=4pt,minimum height=4pt]
\tikzstyle{paramhollow}=[rectangle,fill=White,draw=Black,inner sep=0pt,minimum
\DeclareRobustCommand{\mb}[1]{\ensuremath{\mathbf{\boldsymbol{#1}}}}
\DeclareMathOperator*{\argmax}{arg\,max}
\crefname{lemma}{lemma}{lemmas}
\Crefname{lemma}{Lemma}{Lemmas}
\crefname{thm}{theorem}{theorems}
\Crefname{thm}{Theorem}{Theorems}
\crefname{prop}{proposition}{propositions}
\Crefname{prop}{Proposition}{Propositions}
\crefname{assumption}{assumption}{assumptions}
\Crefname{assumption}{Assumption}{Assumptions}
\crefname{exmp}{example}{examples}
\Crefname{exmp}{Example}{Examples}
\crefname{defn}{definition}{definitions}
\Crefname{defn}{Definition}{Definitions}
\crefname{equation}{eq.}{eqs.}
\Crefname{equation}{Eq.}{Eqs.}
\newtheorem{thm}{Theorem} % reset theorem numbering for each chapter
\newtheorem{defn}{Definition} % definition numbers are dependent on theorem numbers
\newtheorem{prop}[thm]{Proposition}
\newtheorem{exmp}{Example} % same for example numbers
\newcommand{\mbx}{\mb{x}}
\newcommand{\mbz}{\mb{z}}
\newcommand\dif{\mathop{}\!\mathrm{d}}
\newcommand{\E}[2]{\mathbb{E}_{#1}\left[#2\right]}
\newcommand{\cN}{\mathcal{N}}
\newcommand{\g}{\, | \,}
\newcommand{\s}{\, ; \,}
\title{Posterior Collapse and \\ Latent Variable Non-identifiability}
\author{
Yixin Wang\\
University of Michigan\\
\texttt{yixinw@umich.edu}\\
\And
David M. Blei\\
Columbia University\\
\texttt{david.blei@columbia.edu}\\
\And
John P. Cunningham\\
Columbia University\\
\texttt{jpc2181@columbia.edu}\\
}
\begin{document}

\maketitle

% \vskip 0.3in
% \begin{tcolorbox}
% \centering Preliminary draft. Please do not cite or distribute.
% \end{tcolorbox}

\begin{bibunit}[alp]
% !TEX root = collapse_id.tex
\begin{abstract}
Variational autoencoders model high-dimensional data by positing
low-dimensional latent variables that are mapped through a flexible
distribution parametrized by a neural network. Unfortunately,
variational autoencoders often suffer from posterior collapse: the
posterior of the latent variables is equal to its prior, rendering the
variational autoencoder useless as a means to produce meaningful
representations. Existing approaches to posterior collapse often
attribute it to the use of neural networks or optimization issues due
to variational approximation. In this paper, we consider posterior
collapse as a problem of latent variable non-identifiability. We prove
that the posterior collapses if and only if the latent variables are
non-identifiable in the generative model. This fact implies that
posterior collapse is not a phenomenon specific to the use of flexible
distributions or approximate inference. Rather, it can occur in
classical probabilistic models even with exact inference, which we
also demonstrate. Based on these results, we propose a class of
latent-identifiable variational autoencoders, deep generative models
which enforce identifiability without sacrificing flexibility. This
model class resolves the problem of latent variable
non-identifiability by leveraging bijective Brenier maps and
parameterizing them with input convex neural networks, without special
variational inference objectives or optimization tricks. Across
synthetic and real datasets, latent-identifiable variational
autoencoders outperform existing methods in mitigating posterior
collapse and providing meaningful representations of the data.
\end{abstract}

%%% Local Variables:
%%% mode: latex
%%% TeX-master: "collapse_id"
%%% End:

% !TEX root = collapse_id.tex
\section{Introduction}
\label{sec:introduction}

\glsresetall

\Glspl{VAE} are powerful generative models for high-dimensional
data~\citep{rezende2014stochastic,diederik2014auto}. Their key idea is
to combine the inference principles of probabilistic modeling with the
flexibility of neural networks. In a \gls{VAE}, each datapoint is
independently generated by a low-dimensional latent variable drawn
from a prior, then mapped to a flexible distribution parametrized by a
neural network.

Unfortunately, \gls{VAE} often suffer from posterior collapse, an
important and widely studied phenomenon where the posterior of the
latent variables is equal to
prior~\citep{bowman2016generating,chen2016variational,zhao2018unsupervised,oord2017neural}.
This phenomenon is also known as latent variable collapse, KL
vanishing, and over-pruning. Posterior collapse renders the \gls{VAE}
useless to produce meaningful representations, in so much as its
per-datapoint latent variables all have the exact same posterior.

Posterior collapse is commonly observed in the \gls{VAE} whose
generative model is highly flexible, leading to the common speculation
that posterior collapse occurs because \gls{VAE} involve flexible
neural networks in the generative model~\citep{dai2019usual}, or
because it uses variational
inference~\citep{yacoby2020characterizing}. Based on these hypotheses,
many of the proposed strategies for mitigating posterior collapse thus
focus on modifying the variational inference objective (e.g.
\cite{razavi2019preventing}), designing special optimization schemes
for variational inference in \gls{VAE} (e.g.
\cite{li2019surprisingly,kim2018semi,he2019lagging}), or limiting the
capacity of the generative model (e.g.
\cite{yang2017improved,bowman2016generating,gulrajani2016pixelvae}.)

In this paper, we consider posterior collapse as a problem of latent
variable non-identifiability. We prove that posterior collapse occurs
if and only if the latent variable is non-identifiable in the
generative model, which loosely means the likelihood function does not
depend on the latent
variable~\citep{raue2009structural,wieland2021structural,poirier1998revising}.
Below, we formally establish this equivalence by appealing to recent
results in Bayesian
non-identifiability~\citep{san2010bayesian,raue2013joining,raue2009structural,xie2006measures,poirier1998revising}.

More broadly, the relationship between posterior collapse and latent
variable non-identifiability implies that posterior collapse is not a
phenomenon specific to the use of neural networks or variational
inference. Rather, it can also occur in classical probabilistic models
fitted with exact inference methods, such as Gaussian mixture models
and \gls{PPCA}. This relationship also leads to a new perspective on
existing methods for avoiding posterior collapse, such as the
delta-VAE~\citep{razavi2019preventing} or the $\beta$-VAE
\citep{higgins2016beta}. These methods heuristically adjust the
approximate inference procedure embedded in the optimization of the
model parameters. Though originally motivated by the goal of patching
the variational objective, the results here suggest that these
adjustments are useful because they help avoid parameters at which the
latent variable is non-identifiable and, consequently, avoid posterior
collapse.

The relationship between posterior collapse and non-identifiability
points to a direct solution to the problem: we must make the latent
variable identifiable. To this end, we propose latent-identifiable
\gls{VAE}, a class of \gls{VAE} that is as flexible as classical
\gls{VAE} while also being identifiable. Latent-identifiable \gls{VAE}
resolves the latent variable non-identifiability by leveraging Brenier
maps~\citep{peyre2019computational,mccann1995existence} and
parameterizing them with input-convex neural
networks~\citep{amos2017input,makkuva2019optimal}. Inference on
identifiable \gls{VAE} uses the standard variational inference
objective, without special modifications or optimization tricks.
Across synthetic and real datasets, we show that identifiable
\gls{VAE} mitigates posterior collapse without sacrificing fidelity to
the data.% ; see \Cref{fig:pinwheel_nonid,fig:pinwheel_id} for an example.

\parhead{Related work.} Existing approaches to avoiding posterior
collapse often modify the variational inference objective, design new
initialization or optimization schemes for \gls{VAE}, or add neural
network links between each data point and their latent
variables~\citep{bowman2016generating,hoffman2016elbo,sonderby2016train,kingma2016improved,chen2016variational,zhao2018unsupervised,yeung2017tackling,alemi2017fixing,fu2019cyclical,asperti2019variational,li2019surprisingly,seybold2019dueling,zhao2020discretized,havrylov2020preventing,dieng2018avoiding,he2019lagging,kim2018semi,razavi2019preventing,shu2016,tomczak2017vae,oord2017neural,gulrajani2016pixelvae,malloe2019}.
Several recent papers also attempt to provide explanations for
posterior collapse. \citet{chen2016variational} explains how the
inexact variational approximation can lead to inefficiency of coding
in VAE, which could lead to posterior collapse due to a form of
information preference. \citet{dai2019usual} argues that posterior
collapse can be partially attributed to the local optima in training
\gls{VAE} with deep neural networks. \citet{lucas2019don} shows that
posterior collapse is not specific to the variational inference
training objective; absent a variational approximation, the log
marginal likelihood of \gls{PPCA} has bad local optima that can lead
to posterior collapse. \citet{yacoby2020characterizing} discusses how
variational approximation can select an undesirable generative model
when the generative model parameters are non-identifiable. In contrast
to these works, we consider posterior collapse solely as a problem of
latent variable non-identifiability, and not of optimization,
variational approximations, or neural networks per se. We use this
result to propose the identifiable \gls{VAE} as a way to directly
avoid posterior collapse.

Outside \gls{VAE}, latent variable identifiability in probabilistic
models has long been studied in the statistics
literature~\cite{san2010bayesian,raue2013joining,raue2009structural,xie2006measures,poirier1998revising,wieland2021structural,raue2009structural}.
More recently, \citet{identifying2017} studies the effect of latent
variable identifiability on Bayesian computation for Gaussian
mixtures.  \citet{khemakhem2019variational,khemakhem2020icebeem} propose
to resolve the non-identifiability in deep generative models by
appealing to auxiliary data. \citet{kumar2020implicit} study how the
variational family can help resolve the non-identifiability of
\gls{VAE}. These works address the identifiability issue for a
different goal: they develop identifiability conditions for different
subsets of \gls{VAE}, aiming for recovering true causal factors of the
data and improving disentanglement or out-of-distribution
generalization. Related to these papers, we demonstrate posterior
collapse as an additional way that the concept of identifiability,
though classical, can be instrumental in modern probabilistic
modeling. Considering identifiability leads to new solutions to
posterior collapse.

\parhead{Contributions.} We prove that posterior collapse occurs if
and only if the latent variable in the generative model is
non-identifiable. We then propose latent-identifiable \gls{VAE}, a
class of \gls{VAE} that are as flexible as classical \gls{VAE} but
have latent variables that are provably identifiable. Across synthetic
and real datasets, we demonstrate that latent-identifiable \gls{VAE}
mitigates posterior collapse without modifying \gls{VAE} objectives or
applying special optimization tricks.

%%% Local Variables:
%%% mode: latex
%%% TeX-master: "collapse_id"
%%% End:

% !TEX root = collapse_id.tex
\glsreset{VAE}
\section{Posterior collapse and latent variable non-identifiability}
\label{sec:methods}

\glsreset{GMM}
\glsreset{PPCA}

Consider a dataset $\mbx = (x_1, \ldots, x_n)$; each datapoint is
$m$-dimensional. Positing $n$ latent variables $\mbz = (z_1, \ldots,
z_n)$, a \gls{VAE} assumes that each datapoint $x_i$ is generated by a
$K$-dimensional latent variable $z_i$:
\begin{align}
z_i &\sim p(z_i),\qquad
x_i \g z_i \sim p(x_i\g z_i\s \theta)=\mathrm{EF}(x_i\g f_\theta(z_i)),\label{eq:vae-gen1}
% \label{eq:vae-gen2}\\
\end{align}
where $x_i$ follows an exponential family distribution with parameters
$f_\theta(z_i)$; $f_\theta$ parameterizes the conditional likelihood.
In a deep generative model $f_{\theta}$ is a parameterized neural
network. Classical probabilistic models like Gaussian mixture
model~\citep{reynolds2009gaussian} and probabilistic
PCA~\citep{tipping1999probabilistic,collins2001generalization,roweis1998algorithms,roweis1999unifying}
are also special cases of \Cref{eq:vae-gen1}.

To fit the model, \gls{VAE} optimizes the parameters $\theta$ by
maximizing a variational approximation of the log marginal likelihood.
After finding an optimal $\hat{\theta}$, we can form a representation
of the data using the approximate posterior $q_{\hat{\phi}}(z\g x)$
with variational parameters $\hat{\phi}$ or its expectation
$\E{q_{\hat{\phi}}(z\g x)}{z\g x}$.

Note that here we abstract away computational considerations and
consider the ideal case where the variational approximation is exact.
This choice is sensible: if the exact posterior suffers from posterior
collapse then so will the approximate posterior (a variational
approximation cannot ``uncollapse'' a collapsed posterior).  That said
we also note that there exist in practice situations where variational
inference alone can lead to posterior collapse. A notable example is
when the variational approximating family is overly restrictive: it is
then possible to have non-collapsing exact posteriors but collapsing
approximate posteriors.

\subsection{Posterior collapse $\Leftrightarrow$ Latent variable
non-identifiability}

We first define posterior collapse and latent variable
non-identifiability, then proving their connection.

\begin{defn}[Posterior
collapse~\citep{bowman2016generating,chen2016variational,zhao2018unsupervised,oord2017neural}\label{defn:posterior-collapse}]
Given a probability model $p(\mbx, \mbz\s \theta)$, a parameter value
$\theta=\hat{\theta}$, and a dataset $\mbx=(x_1,
\ldots, x_n)$, the posterior of the latent variables $\mbz$ collapses
if
\begin{align}
\label{eq:exact-posterior-collapse}
p(\mbz \g \mbx \s \hat{\theta}) = p(\mbz).
\end{align}
\end{defn}
The posterior collapse phenomenon can occur in a variety of
probabilistic models and with different latent variables. When the
probability model is a \gls{VAE}, it only has local latent variables
$\mbz = (z_1, \ldots, z_n)$, and \Cref{eq:exact-posterior-collapse} is
equivalent to the common definition of posterior collapse $p(z_i\g
x_i\s \hat{\theta}) = p(z_i)$ for all
$i$~\citep{lucas2019don,razavi2019preventing,havrylov2020preventing,dieng2018avoiding}.
Posterior collapse has also been observed in Gaussian mixture
models~\citep{identifying2017}; the posterior of the latent mixture
weights resembles their prior when the number of mixture components in
the model is larger than that of the data generating process.
Regardless of the model, when posterior collapse occurs, it prevents
the latent variable from providing meaningful summary of the dataset.

\begin{defn}[Latent variable
non-identifiability~\citep{raue2009structural,wieland2021structural}]
\label{defn:non-id}
Given a likelihood function $p(\mbx\g\mbz \s \theta)$, a
parameter value $\theta=\hat{\theta}$, and a dataset $\mbx=(x_1,
\ldots, x_n)$, the latent variable $\mbz$ is non-identifiable if
\begin{align}
  p(\mbx \g \mbz=\tilde{\mbz}' \s \hat{\theta}) = p(\mbx \g
  \mbz=\tilde{\mbz} \s \hat{\theta})
\qquad
\forall \tilde{\mbz}', \tilde{\mbz} \in \mathcal{Z},
\label{eq:non-id-def}
\end{align}
where $\mathcal{Z}$ denotes the domain of $\mbz$, and $\tilde{\mbz}',
\tilde{\mbz}$ refer to two arbitrary values the latent variable $\mbz$
can take. As a consequence, for any prior $p(z)$ on $z$, we have the
conditional likelihood equal to the marginal $p(\mbx \g
\mbz=\tilde{\mbz} \s \hat{\theta}) = \int p(\mbx \g \mbz \s
\hat{\theta}) p(\mbz) \dif \mbz = p(\mbx
\s
\hat{\theta})  \quad \forall \tilde{\mbz} \in \mathcal{Z}.$
\end{defn}
\Cref{defn:non-id} says a latent variable $\mbz$ is non-identifiable
when the likelihood of the dataset $\mbx$ does not depend on $\mbz$.
It is also known as practical
non-identifiability~\citep{raue2009structural,wieland2021structural}
and is closely related to the definition of $\mbz$ being conditionally
non-identifiable (or conditionally uninformative) given
$\hat{\theta}$~\citep{san2010bayesian,raue2013joining,raue2009structural,xie2006measures,poirier1998revising}.
To enforce latent variable identifiability, it is sufficient to ensure
that the likelihood $p(\mbx \g \mbz, \theta)$ is an injective (a.k.a.
one-to-one) function of $\mbz$ for all $\theta$. If this condition
holds then
\begin{align}
\tilde{\mbz}'\ne \tilde{\mbz} \qquad
\Rightarrow \qquad
p(\mbx \g \mbz=\tilde{\mbz}' \s \hat{\theta}) \ne p(\mbx \g
\mbz=\tilde{\mbz} \s \hat{\theta}).
\end{align}

Note that latent variable non-identifiability only requires
\Cref{eq:non-id-def} be true for a given dataset $\mbx$ and parameter
value $\hat{\theta}$. Thus a latent variable may be identifiable in a
model given one dataset but not another, and at one $\theta$ but not
another. See examples in \Cref{exmp:PPCA}.

Latent variable
identifiability~(\Cref{defn:non-id})~\citep{raue2009structural,wieland2021structural}
differs from model identifiability~\citep{rao1992identifiability}, a
related notion that has also been cited as a contributing factor to
posterior collapse~\citep{yacoby2020characterizing}. Latent variable
identifiability is a weaker requirement: it only requires the latent
variable $\mbz$ be identifiable at a particular parameter value
$\theta=\hat{\theta}$, while model identifiability requires both
$\mbz$ and $\theta$ be identifiable.

We now establish the equivalence between posterior collapse and latent
variable non-identifiability.

\begin{thm}[Latent variable non-identifiability $\Leftrightarrow$
Posterior collapse]
  \label{thm:collapse-nonid-equiv} Consider a probability model
  $p(\mbx, \mbz\s \theta)$, a dataset $\mbx$, and a parameter value
  $\theta=\hat{\theta}$. The local latent variables $\mbz$ are
  non-identifiable at $\hat{\theta}$ if and only if the posterior of
  the latent variable $\mbz$ collapses, $p(\mbz \g \mbx) = p(\mbz)$.
\end{thm}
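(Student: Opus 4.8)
The plan is to obtain both implications from a single application of Bayes' rule, $p(\mbz \g \mbx \s \hat{\theta}) = p(\mbx \g \mbz \s \hat{\theta})\, p(\mbz) / p(\mbx \s \hat{\theta})$, using the likelihood's dependence on $\mbz$ as the pivot.

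For the ($\Rightarrow$) direction I would assume $\mbz$ is non-identifiable at $\hat{\theta}$. Then the consequence displayed inside \Cref{defn:non-id} already gives that the likelihood is flat in $\mbz$, i.e.\ $p(\mbx \g \mbz = \tilde{\mbz} \s \hat{\theta}) = p(\mbx \s \hat{\theta})$ for every $\tilde{\mbz} \in \mathcal{Z}$. Substituting this into Bayes' rule, the factor $p(\mbx \s \hat{\theta})$ cancels and one is left with $p(\mbz \g \mbx \s \hat{\theta}) = p(\mbz)$, which is posterior collapse (\Cref{defn:posterior-collapse}). This direction is essentially a one-line computation once the consequence in \Cref{defn:non-id} is invoked.

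For the ($\Leftarrow$) direction I would assume posterior collapse, $p(\mbz \g \mbx \s \hat{\theta}) = p(\mbz)$, and read Bayes' rule backwards: $p(\mbz) = p(\mbx \g \mbz \s \hat{\theta})\, p(\mbz) / p(\mbx \s \hat{\theta})$. Cancelling $p(\mbz)$ wherever it is positive yields $p(\mbx \g \mbz \s \hat{\theta}) = p(\mbx \s \hat{\theta})$, and since the right-hand side does not involve $\mbz$, the likelihood takes the common value $p(\mbx \s \hat{\theta})$ at all $\tilde{\mbz}$ in the support of $p(\mbz)$, which is exactly \Cref{eq:non-id-def}.

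The step I expect to require the most care is the measure-theoretic gap in the reverse direction: cancellation in Bayes' rule only delivers equality for $p(\mbz)$-almost every $\tilde{\mbz}$ in the support of the prior, whereas \Cref{defn:non-id} is phrased ``for all $\tilde{\mbz} \in \mathcal{Z}$'', and in the continuous case the distributional equality in \Cref{defn:posterior-collapse} likewise holds only $p(\mbz)$-a.e. I would close this gap by taking $\mathcal{Z}$ to be the support of $p(\mbz)$ (the usual convention---e.g.\ the standard Gaussian \gls{VAE} prior has full support, so $\mathcal{Z} = \bbR^K$ and there is nothing to check), or, when genuinely pointwise statements are wanted, by assuming $z \mapsto p(\mbx \g \mbz \s \hat{\theta})$ is continuous so that equality on a dense set extends to all of $\mathcal{Z}$; this mirrors the treatment of conditionally uninformative parameters in the Bayesian non-identifiability literature. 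Finally, I would remark that working with the joint objects loses nothing: for a \gls{VAE}, $p(\mbz) = \prod_i p(z_i)$ and $p(\mbx \g \mbz \s \hat{\theta}) = \prod_i p(x_i \g z_i \s \hat{\theta})$, so collapse of the joint posterior to the joint prior is equivalent to $p(z_i \g x_i \s \hat{\theta}) = p(z_i)$ for every $i$, and likewise for non-identifiability.
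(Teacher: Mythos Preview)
Your proposal is correct and matches the paper's own proof essentially line for line: both directions are obtained from a single application of Bayes' rule, with the forward direction using the flat-likelihood consequence of \Cref{defn:non-id} and the reverse direction cancelling $p(\mbz)$ to conclude the likelihood is constant in $\mbz$. Your additional care about the a.e.\ versus pointwise issue and the factorisation remark go beyond what the paper spells out, but the core argument is identical.
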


\begin{proof} To prove that non-identifiability implies posterior
collapse, note that, by Bayes rule,
\begin{align}
p(\mbz \g \mbx\s \hat{\theta}) \propto
p(\mbz) p(\mbx \g \mbz\s \hat{\theta}) = p(\mbz) p(\mbx
\s
\hat{\theta})\propto p(\mbz),
\label{eq:collapse-nonID-proof}
\end{align} where the middle equality is due to the definition of
latent variable non-identifiability. It implies $p(\mbz \g \mbx\s
\hat{\theta}) = p(\mbz)$ as both are densities. To prove that
posterior collapse implies latent variable non-identifiability, we
again invoke Bayes rule. Posterior collapse implies that $p(\mbz) =
p(\mbz \g \mbx\s
\hat{\theta}) \propto p(\mbz)\cdot p(\mbx
\g \mbz\s \hat{\theta})$, which further implies that $p(\mbx \g \mbz\s
\hat{\theta})$ is constant in $\mbz$. If $p(\mbx \g \mbz\s
\hat{\theta})$ nontrivially depends on $\mbz$, then $p(\mbz)$ must be
different from $p(\mbz)p(\mbx \g \mbz\s \hat{\theta})$ as a function
of $\mbz$.
\end{proof}

The proof of \Cref{thm:collapse-nonid-equiv} is straightforward, but
\Cref{thm:collapse-nonid-equiv} has an important implication. It shows
that the problem of posterior collapse mainly arises from the model
and the data, rather than from inference or optimization. If the
maximum likelihood parameters $\hat{\theta}$ of the \gls{VAE} renders
the latent variable $z$ non-identifiable, then we will observe
posterior collapse. \Cref{thm:collapse-nonid-equiv} also clarifies why
posteriors may change from non-collapsed to collapsed (and back) while
fitting a VAE. When fitting a VAE, Some parameter iterates may lead to
posterior collapse; others may not.

\Cref{thm:collapse-nonid-equiv} points to why existing approaches can
help mitigate posterior collapse. Consider the
$\beta$-\gls{VAE}~\citep{higgins2016beta}, the \gls{VAE} lagging
encoder~\citep{he2019lagging}, and the semi-amortized
\gls{VAE}~\citep{kim2018semi}. Though motivated by other perspectives,
these methods modify the optimization objectives or algorithms of
\gls{VAE} to avoid parameter values $\theta$ at which the latent
variable is non-identifiable. The resulting posterior may not
collapse, though the optimal parameters for these algorithms no longer
approximates the maximum likelihood estimate.

\Cref{thm:collapse-nonid-equiv} can also help us understand posterior
collapse observed in practice, which manifests as the phenomenon that
the posterior is approximately (as opposed to exactly) equal to the
prior, $p(\mbz \g \mbx \s \hat{\theta}) \approx p(\mbz)$. In several
empirical studies of \gls{VAE}~(e.g.
\citep{he2019lagging,dieng2018avoiding,kim2018semi}), we observe that
the \gls{KL} divergence between the prior and posterior is close to
zero but not exactly zero, a property that stems from the likelihood
$p(\mbx\g\mbz)$ being nearly constant in the latents $\mbz$. In these
cases, \Cref{thm:collapse-nonid-equiv} provides the intuition that the
latent variable is nearly non-identifiable , $p(\mbx \g \tilde{\mbz}')
\approx p(\mbx \g \tilde{\mbz}), \forall
\tilde{\mbz}, \tilde{\mbz}'$ and so \Cref{eq:exact-posterior-collapse}
holds approximately.

\subsection{Examples of latent variable non-identifiability and
posterior collapse}

\glsreset{GMM}
\glsreset{PPCA}
\glsreset{GMVAE}

We illustrate \Cref{thm:collapse-nonid-equiv} with three examples.
Here we discuss the example of \gls{GMVAE}.  See \Cref{exmp:PPCA} for
\gls{PPCA} and \gls{GMM}.

The \gls{GMVAE}~\citep{shu2016,dilokthanakul2016deep} is the following
model:
\begin{align*}
p(z_i) = \mathrm{Categorical}(1/K),\quad
p(w_i\g z_i\s \mu, \Sigma) = \cN(\mu_{z_i}, \Sigma_{z_i}),\quad
p(x_i \g w_i\s f, \sigma) = \cN(f(w_i), \sigma^2\cdot I_m),
\end{align*}
where $\mu_k$'s are $d$-dimensional, $\Sigma_k$ are $d\times
d$-dimensional, and the parameters are $\theta = (\mu, \Sigma, f,
\sigma^2)$. Suppose the function $f$ is fully flexible; thus
$f(w_i)$ can capture any distribution of the data. The latent
variable of interest is the categorical $\mbz = (z_1, \ldots, z_n)$.
If its posterior collapses, then $p(z_i=k\g\mbx) = 1/K$ for all
$k=1,\ldots, K$.

\label{exmp:GMVAE}

Consider fitting a \gls{GMVAE} model with $K=2$ to a dataset of 5,000
samples.  This dataset is drawn from a \gls{GMVAE} also with $K=2$
well-separated clusters; there is no model misspecification. A
\gls{GMVAE} is typically fit by optimizing the maximum log marginal
likelihood $\hat{\theta} = \argmax_\theta \log p(\mbx \g \theta)$.
Note there may be multiple values of $\theta$ that achieve the global
optimum of this function.

We focus on two likelihood maximizers. One provides latent variable
identifiability and the posterior of $z_i$ does not collapse. The
other does not provide identifiablity; the posterior collapses.

\begin{enumerate}[leftmargin=*]
\item The first likelihood-maximizing parameter $\hat{\theta}_1$
  is the truth; the distribution of the $K$ fitted clusters correspond
  to the $K$ data-generating clusters. Given this parameter, the
  latent variable $z_i$ is identifiable because the $K$
  data-generating clusters are different; different cluster
  memberships $z_i$ must result in different likelihoods $p(x_i\g
  z_i\s \hat{\theta}_1)$.  The posterior of $z_i$ does not collapse.

\item In the second likelihood-maximizing parameter $\hat{\theta}_2$,
  however, all $K$ fitted clusters share the same distribution, each
  of which is equal to the marginal distribution of the data.
  Specifically, $(\mu_k^*, \Sigma_k^*) = (0, I_d)$ for all $k$, and
  each fitted cluster is a mixture of the $K$ original data generating
  clusters, i.e., the marginal.  At this parameter value, the model is
  still able to fully capture the mixture distribution of the data.
  However, all the $K$ mixture components are the same, and thus the
  latent variable $z_i$ is non-identifiable; different cluster
  membership $z_i$ do not result in different likelihoods $p(x_i\g
  z_i\s \hat{\theta}_2)$, and hence the posterior of $z_i$ collapses.
  \Cref{fig:pinwheel_nonid} illustrates a fit of this
  (non-identifiable) \gls{GMVAE} to the pinwheel
  data~\citep{johnson2016composing}. In \Cref{subsec:idvae}, we
  construct an \gls{IDVAE} that avoids this collapse.

\end{enumerate}

Latent variable identifiability is a function of the both the model
and the true data-generating distribution.  Consider fitting the same
\gls{GMVAE} with $K=2$ but to a different dataset of 5,000 samples,
this one drawn from a \gls{GMVAE} with only one cluster.  (There is
model misspecification.)  One maximizing parameter value
$\hat{\theta}_3$ is where both of the fitted clusters correspond to
the true data generating cluster. While this parameter value resembles
that of the first maximizer $\hat{\theta}_1$ above---both correspond
to the true data generating cluster---this dataset leads to a
different situation for latent variable identifiability.  The two
fitted clusters are the same and so different cluster memberships do
not result in different likelihoods of $p(x_i\g z_i\s
\hat{\theta}_3)$.  The latent variable $z_i$ is not identifiable and
its posterior collapses.

\parhead{Takeaways.} The \gls{GMVAE} example in this section (and the
\gls{PPCA} and \gls{GMM} examples in \Cref{exmp:PPCA}) illustrate
different ways that a latent variable can be non-identifiable in a
model and suffer from posterior collapse. They show that even the true
posterior---without variational inference---can collapse in
non-identifiable models. They also illustrate that whether a latent
variable is identifiable can depend on both the model and the data.
Posterior collapse is an intrinsic problem of the model and the data,
rather than specific to the use of neural networks or variational
inference.

The equivalence between posterior collapse and latent variable
non-identifiability in \Cref{thm:collapse-nonid-equiv} also implies
that, to mitigate posterior collapse, we should try to resolve latent
variable non-identifiability. In the next section, we develop such a
class of latent-identifiable \gls{VAE}.

\section{Latent-identifiable \gls{VAE} via Brenier maps}
\label{subsec:idvae}

We now construct latent-identifiable \gls{VAE}, a class of
\gls{VAE} whose latent variables are guaranteed to be identifiable,
and thus the posteriors cannot collapse.

\subsection{The latent-identifiable \gls{VAE} }

To construct the latent-identifiable \gls{VAE}, we rely on a key observation
that, to guarantee latent variable identifiability, it is sufficient
to make the likelihood function $P(x_i\g z_i\s \theta)$ injective for
all values of $\theta$. If the likelihood is injective, then, for any
$\theta$, each value of $z_i$ will lead to a different distribution
$P(x_i\g z_i\s \theta)$.  In particular, this fact will be true for
any optimized $\hat{\theta}$ and so the latent $z_i$ must be
identifiable, regardless of the data. By
\Cref{thm:collapse-nonid-equiv}, its posterior cannot collapse.

Constructing latent-identifiable \gls{VAE} thus amounts to constructing an
injective likelihood function for \gls{VAE}. The construction is based
on a few building blocks of linear and nonlinear injective functions,
then composed into an injective likelihood $p(x_i\g z_i\s
\theta)$ mapping from $\mathcal{Z}^d$ to $\mathcal{X}^m$, where
$\mathcal{Z}$ and $\mathcal{X}$ indicate the set of values $z_i$ and
$x_i$ can take. For example, if $x_i$ is an m-dimensional binary
vector, then $\mathcal{X} = \{0,1\}^m$; if $z_i$ is a $K$-dimensional
real-valued vector, then $\mathcal{Z} = \mathbb{R}^d$.

\parhead{The building blocks of \gls{IDVAE}: Injective functions.} For
linear mappings from $\mathbb{R}^{d_1}$ to $\mathbb{R}^{d_2}$
$(d_2\geq d_1)$, we consider matrix multiplication by a $d_1\times
d_2$-dimensional matrix~$\beta$. For a $d_1$-dimensional variable $z$,
left multiplication by a matrix $\beta^\top$ is injective when $\beta$
has full column rank~\citep{strang1993introduction}. For example, a
matrix with all ones in the diagonal and all other entries being zero
has full column rank.

For nonlinear injective functions, we focus on Brenier
maps~\citep{ball2004elementary,mccann2011five}. A $d$-dimensional
Brenier map is is the gradient of a convex function from
$\mathbb{R}^d$ to $\mathbb{R}$. That is, a Brenier map satisfies $g =
\nabla T$ for some convex function
$T:\mathbb{R}^d\rightarrow\mathbb{R}$. Brenier maps are also known as
a monotone transport map. They are guaranteed to be
bijective~\citep{ball2004elementary,mccann2011five} because their
derivative is the Hessian of a convex $T$, which must be positive
semidefinite and has a nonnegative
determinant~\citep{ball2004elementary}.

To build a \gls{VAE} with Brenier maps, we require a neural network
parametrization of the Brenier map. As Brenier maps are gradients of
convex functions, we begin with the neural network parametrizaton of
convex functions, namely the
\gls{ICNN}~\citep{amos2017input,makkuva2019optimal}. This
parameterization of convex functions will enable  Brenier maps to be
paramterized as the gradient of \gls{ICNN}.

An $L$-layer \gls{ICNN} is a neural network mapping from
$\mathbb{R}^d$ to $\mathbb{R}$. Given an input $u\in \mathbb{R}^d$,
its $l$th layer is
\begin{align}
\label{eq:icnn}
\mbz_0 = \mathbf{u}, \qquad \mbz_{l+1} = h_l(\mathbf{W}_l \mbz_l+\mathbf{A}_l \mathbf{u}+\mathbf{b}_l), \qquad (l=0,
\ldots, L-1), 
\end{align}
where the last layer $\mbz_L$ must be a scalar, $\{\mathbf{W}_l\}$ are
non-negative weight matrices with $\mathbf{W}_0=\mb{0}$. The functions
$\{h_l:\mathbb{R}\rightarrow\mathbb{R}\}$ are convex and
non-decreasing entry-wise activation functions for layer $l$; they are
applied element-wise to the vector $(\mathbf{W}_l
\mbz_l+\mathbf{A}_l \mathbf{u}+\mathbf{b}_l)$. A common choice of
$h_0:\mathbb{R}\rightarrow\mathbb{R}$ is the square of a leaky RELU,
$h_0(x) = (\max(\alpha \cdot x, x))^2$ with $\alpha=0.2$; the
remaining $h_l$'s are set to be a leaky RELU, $h_l(x)=\max(\alpha
\cdot x, x)$. This neural network is called ``input convex'' because
it is guaranteed to be a convex function.

Input convex neural networks can approximate any convex function on a
compact domain in sup norm (Theorem 1 of \citet{chen2018optimal}.)
Given the neural network parameterization of convex functions, we can
parametrize the Brenier map $g_\theta(\cdot)$ as its gradient with
respect to the input $g_\theta(u)=\partial z_L / \partial u.$
This neural network parameterization of Brenier map is a universal
approxiamtor of all Brenier maps on a compact domain, because input
convex neural networks are universal approximators of convex
functions~\citep{chen2018optimal}.

\glsreset{IDVAE}

\parhead{The \gls{IDVAE}.} We construct injective likelihoods for
\gls{IDVAE} by composing two bijective Brenier maps with an injective
matrix multiplication. As the composition of injective and bijective
mappings must be injective, the resulting composition must be
injective.
Suppose $g_{1,\theta}: \mathbb{R}^{K}\rightarrow \mathbb{R}^{K}$ and
$g_{2,\theta}: \mathbb{R}^{D}\rightarrow \mathbb{R}^{D}$ are two
Brenier maps, and $\beta$ is a $K\times D$-dimensional matrix $(D \geq
K)$ with all the main diagonal entries being one and all other entries
being zero. The matrix $\beta^\top$ has full column rank, so
multiplication by $\beta^\top$ is injective. Thus the composition
$g_{2,\theta}(\beta^\top \,g_{1,\theta}(\cdot))$ must be an injective
function from a low-dimensional space $\mathbb{R}^{K}$ to a
high-dimensional space $\mathbb{R}^{D}$.
\glsreset{IDVAE}
\begin{defn}[\Gls{IDVAE} via Brenier maps]\label{defn:idvae} An
\gls{IDVAE} via Brenier maps generates a $D$-dimensional datapoint
$x_i, \in\{1,
\ldots, n\}$ by:
\begin{align}
\label{eq:id-vae}
z_i \sim p(z_i), \qquad x_i \g z_i \sim \mathrm{EF}(x_i\g g_{2,\theta}(\beta^\top \,g_{1,\theta}(z_i) )),
% \label{eq:idvae-last-layer}
\end{align}
where $\mathrm{EF}$ stands for exponential family distributions; $z_i$
is a $K$-dimensional latent variable, discrete or continuous. The
parameters of the model are $\theta~=~(g_{1,\theta}, g_{2,\theta})$,
where $g_{1,\theta}:
\mathbb{R}^{K}\rightarrow \mathbb{R}^{K}$ and $g_{2,\theta}:
\mathbb{R}^{D}\rightarrow \mathbb{R}^{D}$ are two continuous Brenier
maps. The matrix $\beta$ is a $K\times D$-dimensional matrix $(D \geq
K)$ with all the main diagonal entries being one and all other entries
being zero.
\end{defn}

Contrasting \gls{IDVAE} (\Cref{eq:id-vae}) with the classical
\gls{VAE} (\Cref{eq:vae-gen1}), the \gls{IDVAE} replaces the function
$f_\theta:\mathcal{Z}^K\rightarrow \mathcal{X}^D$ with the injective
mapping $g_{2,\theta}(\beta^\top \,g_{1,\theta}(\cdot))$, composed by
bijective Brenier maps $g_{1,\theta}, g_{2,\theta}$ and a zero-one
matrix $\beta^\top$ with full column rank. As the likelihood functions
of exponential family are injective, the likelihood function $p(x_i\g
z_i\s \theta) = \mathrm{EF}(g_{2,\theta}(\beta^\top
\,g_{1,\theta}(z_i) ))$ of \gls{IDVAE} must be
injective. Therefore, replacing an arbitrary function
$f_\theta:\mathcal{Z}^K\rightarrow \mathcal{X}^D$ with the injective
mapping $g_{2,\theta}(\beta^\top \,g_{1,\theta}(\cdot))$ plays a
crucial role in enforcing identifiability for latent variable $z_i$
and avoiding posterior collapse in
\gls{IDVAE}. As the latent $z_i$ must be identifiable in \gls{IDVAE},
its posterior does not collapse.

Despite its injective likelihood, \gls{IDVAE} are as flexible as
\gls{VAE}; the use of Brenier maps and \gls{ICNN} does not limit the
capacity of the generative model. Loosely, \gls{IDVAE} can model any
distributions in $\mathbb{R}^D$ because Brenier maps can map any given
non-atomic distribution in $\mathbb{R}^d$ to any other one in
$\mathbb{R}^d$~\citep{mccann2011five}. Moreover, the \gls{ICNN}
parametrization is a universal approximator of Brenier
maps~\citep{amos2017input}. We summarize the key properties of
\gls{IDVAE} in the following proposition.

\begin{prop}
\label{prop:IDVAE}
The latent variable $z_i$ is identifiable in
\gls{IDVAE}, i.e. for all $i\in\{1, \ldots, n\}$, we have
\begin{align}
p(x_i \g z_i = \tilde{z}'\s \theta) = p(x_i \g z_i =
\tilde{z} \s \theta) 
\qquad\Rightarrow \qquad\tilde{z}' =
\tilde{z}, \qquad \forall \tilde{z}', \tilde{z}, \theta.
\end{align}
Moreover, for any \gls{VAE}-generated data distribution, there exists
an \gls{IDVAE} that can generate the same distribution. (The proof is
in \Cref{sec:proof-IDVAE}.)
\end{prop}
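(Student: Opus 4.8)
The plan is to prove the two assertions separately. The first — that $z_i$ is identifiable at every $\theta$ — is a pure injectivity argument, while the second — that \gls{IDVAE} loses no expressive power relative to \gls{VAE} — rests on the universality of Brenier maps as transport maps, and is where the real work lies. For identifiability, I would show that $z_i \mapsto p(x_i \g z_i \s \theta)$ is injective by writing it as a composition of injective maps. The generator $h_\theta := g_{2,\theta}(\beta^\top g_{1,\theta}(\cdot))$ is injective because $g_{1,\theta}$ is a Brenier map and hence bijective (its Jacobian is the Hessian of a strictly convex potential, thus invertible), left multiplication by $\beta^\top$ is injective since $\beta$ has full column rank, $g_{2,\theta}$ is again bijective, and a composition of injective maps is injective. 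The exponential-family likelihood $\eta \mapsto \mathrm{EF}(x_i \g \eta)$ is injective in its parameter — distinct natural (equivalently, mean) parameters index distinct members of a minimal exponential family, the fact already invoked in the main text. Composing, $p(x_i \g z_i = \tilde z' \s \theta) = p(x_i \g z_i = \tilde z \s \theta)$ forces $h_\theta(\tilde z') = h_\theta(\tilde z)$, hence $\tilde z' = \tilde z$; since nothing here restricts $\theta$, this holds at every parameter value — in particular at any likelihood maximizer $\hat\theta$ — so \Cref{thm:collapse-nonid-equiv} immediately yields that the posterior of $z_i$ cannot collapse.

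For the expressiveness claim, I would first rewrite the data distribution of an arbitrary \gls{VAE} as a mixture over exponential-family parameters, $p_{\mathrm{VAE}}(x_i) = \int \mathrm{EF}(x_i \g \eta)\, d\nu(\eta)$, where $\nu$ is the pushforward of the \gls{VAE} prior through its generator $f_\theta$. It then suffices to build an \gls{IDVAE} whose generator pushes its (freely chosen) prior forward to $\nu$. I would take the \gls{IDVAE} latent dimension equal to $D$, so that $\beta = I_D$; set $g_{2,\theta}$ to be the identity, which is a legitimate Brenier map as the gradient of $u \mapsto \tfrac12\|u\|^2$; and take the \gls{IDVAE} prior to be non-atomic on $\bbR^D$ (e.g., a standard Gaussian). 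By McCann's theorem \citep{mccann2011five} there is then a Brenier map $g_{1,\theta}$ transporting this prior exactly onto $\nu$, and the resulting generator $g_{2,\theta}(\beta^\top g_{1,\theta}(\cdot)) = g_{1,\theta}(\cdot)$ reproduces $\nu$, hence $p_{\mathrm{VAE}}$. The \gls{ICNN} parametrization then matches $g_{1,\theta}$ to arbitrary sup-norm accuracy on compacta \citep{chen2018optimal,amos2017input}.

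The main obstacle is the regularity required by McCann's theorem in that last step: it needs the target $\nu$ to be sufficiently regular (in particular non-atomic), whereas a \gls{VAE} with $K < D$ latents produces a $\nu$ supported on a set of dimension at most $K$, which is singular in $\bbR^D$. I would address this in two ways. First, state the result for \glspl{VAE} whose induced $\nu$ is non-atomic — arguably the only case of interest, since a degenerate natural-parameter distribution is precisely the pathology we want to avoid. Second, for the general case, pass to a limit: truncate $\nu$ to compact support and convolve with a vanishing-bandwidth kernel to obtain non-atomic targets $\nu_\epsilon \to \nu$, construct an exact \gls{IDVAE} realizing each $\nu_\epsilon$ as above, and observe that $\int \mathrm{EF}(x_i \g \eta)\, d\nu_\epsilon(\eta) \to \int \mathrm{EF}(x_i \g \eta)\, d\nu(\eta)$ by dominated convergence, so the induced data distributions converge. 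This delivers the claim in the sense matching the ``loosely'' hedge preceding the proposition, and it isolates the only non-bookkeeping content of the proof to the transport step.
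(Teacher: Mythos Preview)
Your identifiability argument is essentially the paper's: both write $z_i \mapsto p(x_i \g z_i \s \theta)$ as a composition of injective pieces (bijective Brenier maps, full-column-rank linear padding, injective exponential-family parametrization) and conclude by composition.

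For the flexibility claim your route genuinely differs. You inflate the \gls{IDVAE} latent dimension to $D$, take $\beta = I_D$ and $g_{2,\theta} = \mathrm{id}$, and invoke McCann's theorem to transport a non-atomic prior on $\bbR^D$ directly onto the \gls{VAE}'s induced parameter law $\nu$ (the pushforward of the prior through $f_\theta$), handling the generically singular case $K < D$ by mollification and a limit. The paper instead works in two steps: (i) it quotients the original latent space by the relation $z \sim z' \Leftrightarrow p(x \g z \s \theta) = p(x \g z' \s \theta)$ to obtain an \emph{injective} generator $l_\theta: \bbR^d \to \bbR^D$ with $d \leq K$, using a Brenier map only to re-standardize the prior on $\bbR^d$, where source and target are both full-dimensional and your singular-target obstacle never arises; (ii) it then factors $l_\theta$ in \gls{IDVAE} form by setting $g_{1,\theta} = \mathrm{id}$, $\beta$ the $d \times D$ zero-padding matrix, and $g_{2,\theta}$ an invertible map on $\bbR^D$ that agrees with $l_\theta$ on the embedded $d$-dimensional subspace. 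The paper's construction keeps the latent dimension small and gives exact equality of data distributions, but leaves unverified that the required extension $g_{2,\theta}$ can itself be chosen as a Brenier map; your construction is self-contained on that point but pays with the limiting argument and a blown-up latent dimension. Both readings are consistent with the ``loosely'' hedge preceding the proposition.
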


\subsection{Inference in \gls{IDVAE}} 

Performing inference in \gls{IDVAE} is identical to the classical
\gls{VAE}, as the two \gls{VAE} differ only in their parameter
constraints. To fit an \gls{IDVAE}, we use the classical amortized
inference algorithm of \gls{VAE}; we maximize the \gls{ELBO} of the
log marginal likelihood~\citep{diederik2014auto}.

In general, \gls{IDVAE} are a drop-in replacement for \gls{VAE}. Both
have the same capacity (\Cref{prop:IDVAE}) and share the same
inference algorithm, but \gls{IDVAE} is identifiable and does not
suffer from posterior collapse. The price we pay for \gls{IDVAE} is
computational: the generative model (i.e. decoder) is parametrized
using the gradient of a neural network; its optimization thus requires
calculating gradients of the gradient of a neural network, which
increases the computational complexity of \gls{VAE} inference and can
sometimes challenge optimization. While fitting classical \gls{VAE}
using stochastic gradient descent has $O(k\cdot p)$ computational
complexity, where $k$ is the number of iterations and $p$ is the
number of parameters, fitting latent-identifiable
\gls{VAE} may require $O(k\cdot p^2)$ computational complexity.

\subsection{Extensions of \gls{IDVAE}}

\begin{figure}[t]
\centering
\begin{subfigure}[b]{0.23\textwidth}
\includegraphics[width=\textwidth]{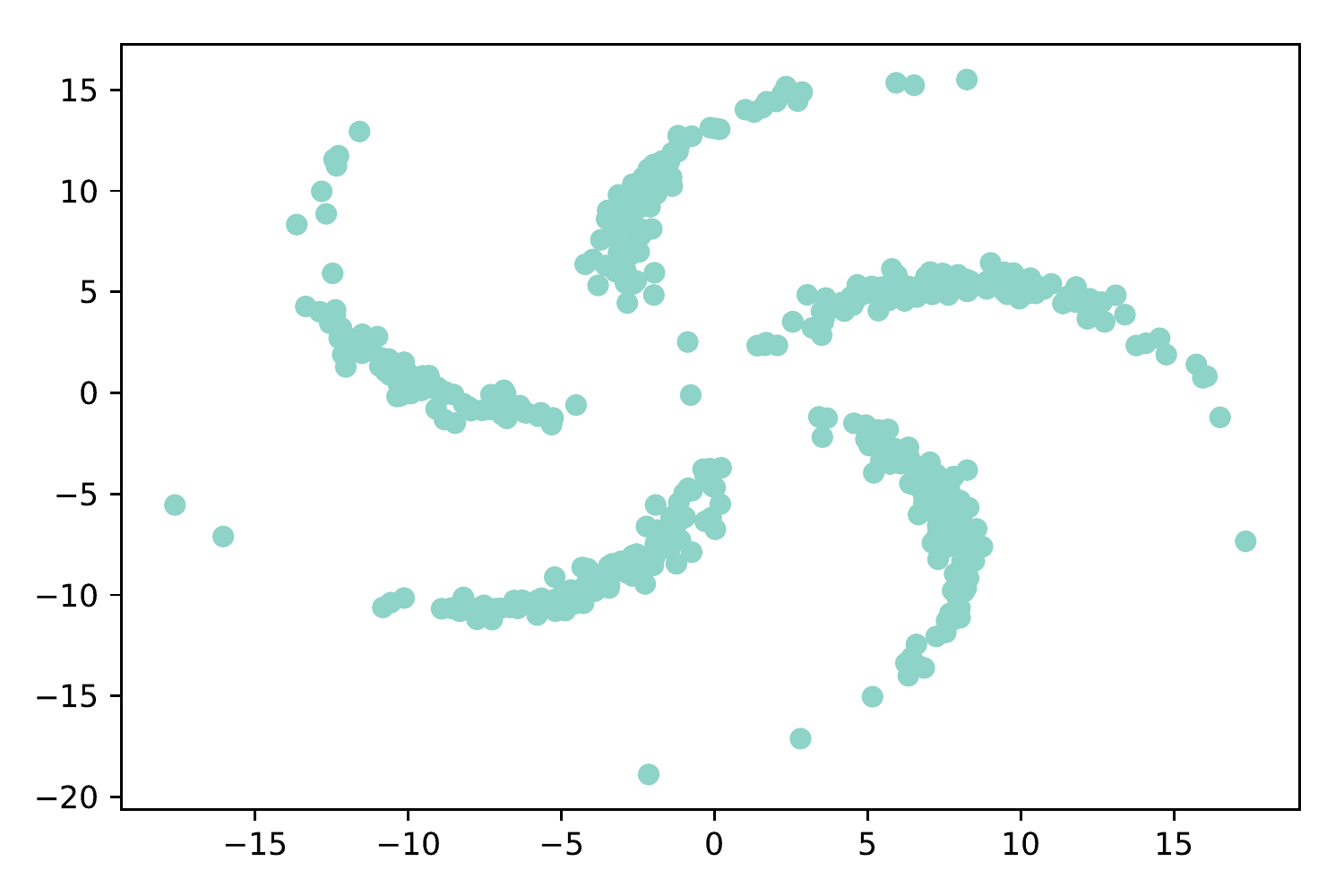}
\caption{Non-ID GMVAE \label{fig:pinwheel_nonid}}
\end{subfigure}%
\begin{subfigure}[b]{0.23\textwidth}
\includegraphics[width=\textwidth]{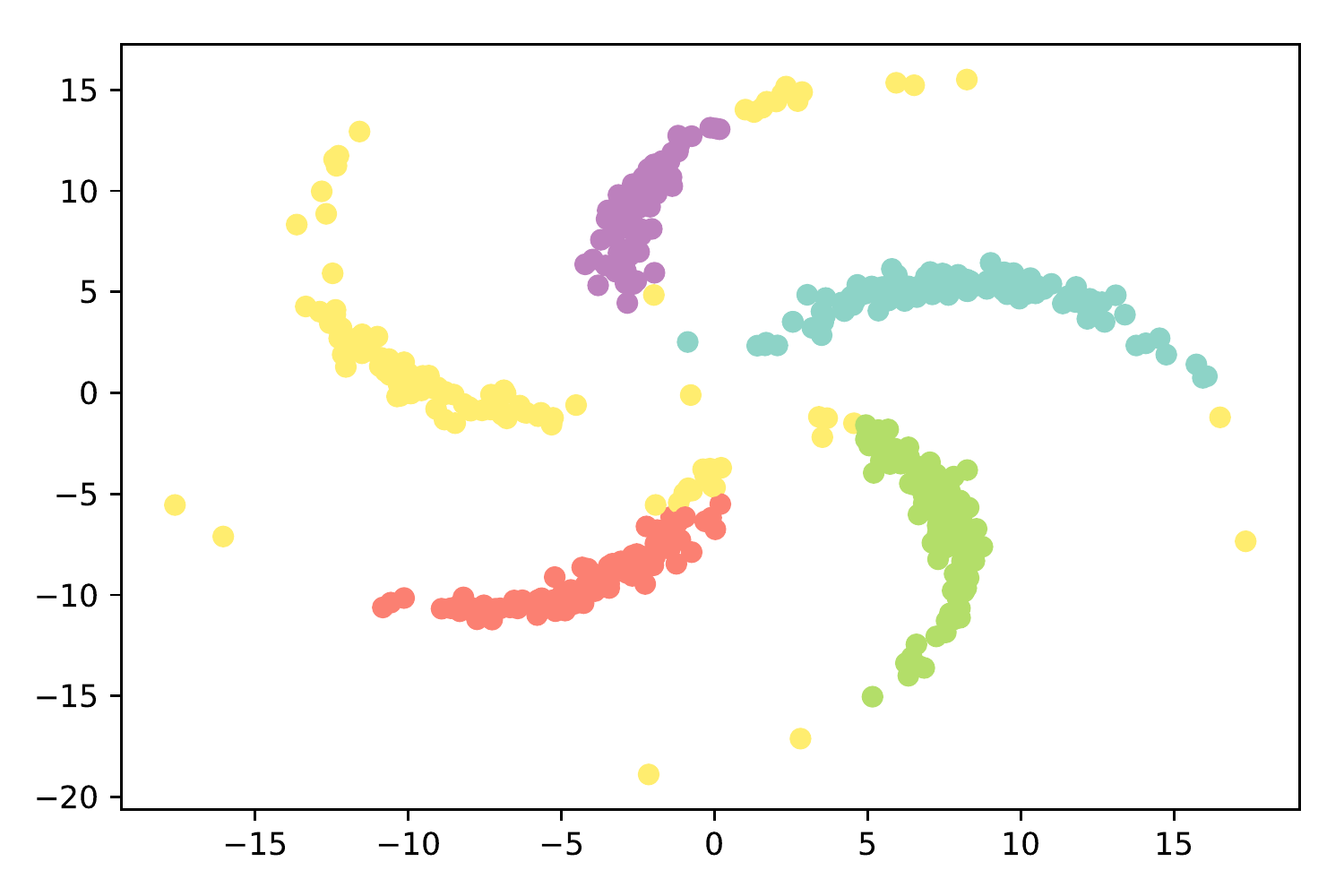}
\caption{IDGMVAE\label{fig:pinwheel_id}}
\end{subfigure}
\begin{subfigure}[b]{0.23\textwidth}
\includegraphics[width=\textwidth]{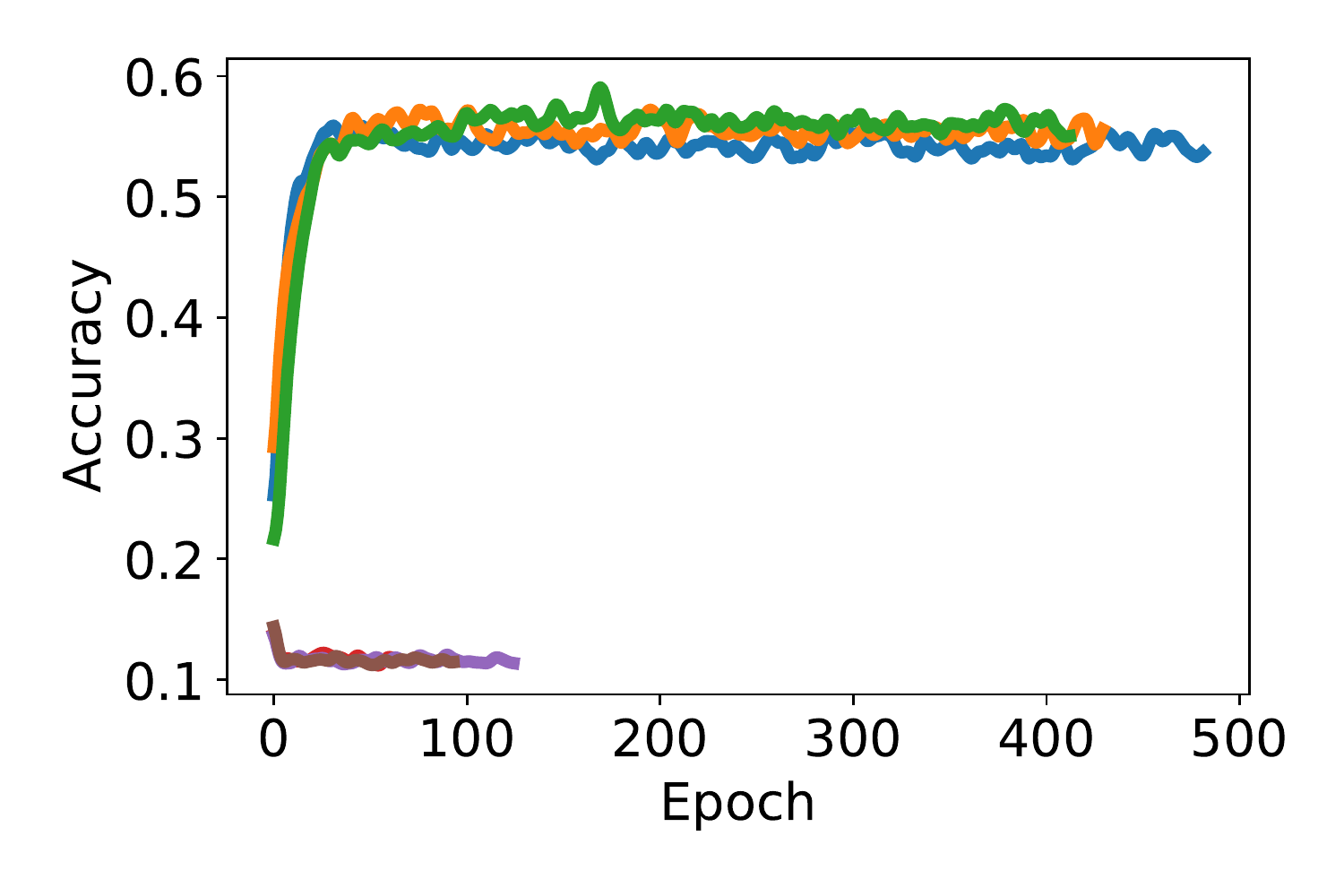}
\caption{Accuracy\label{fig:fashionmist_acc}}
\end{subfigure}
\begin{subfigure}[b]{0.23\textwidth}
\includegraphics[width=\textwidth]{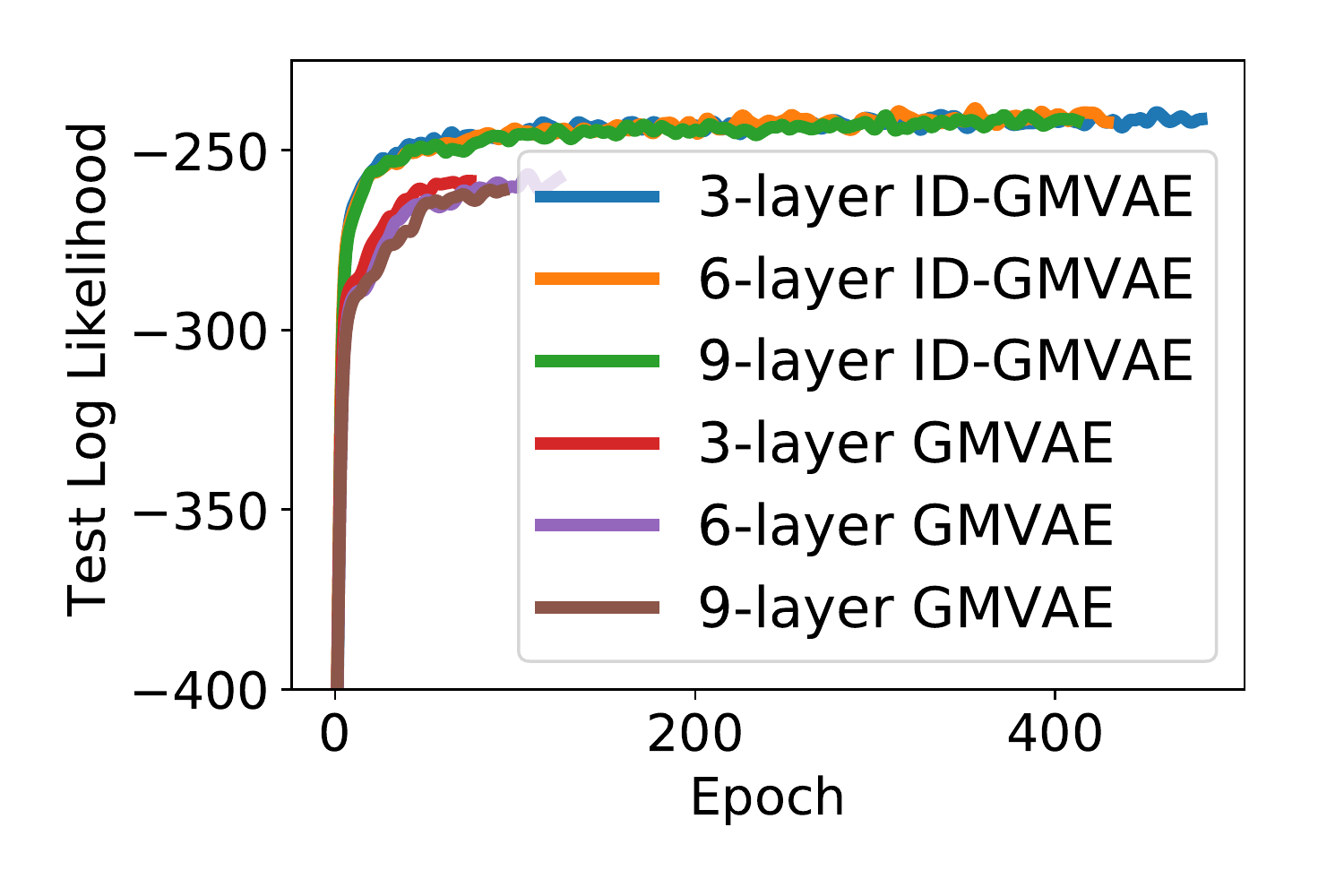}
\caption{Log-likelihood\label{fig:fashionmist_elbo}}
\end{subfigure}
\caption{(a)-(b): The posterior of the
classical
\gls{GMVAE}~\citep{shu2016,kingma2014semi,dilokthanakul2016deep}
collapses when fit to the pinwheel dataset; the latents predict the
same value for all datapoints. The posteriors of
\gls{IDGMVAE}, however, do not collapse and provide meaningful
representations.~\label{fig:gmm_collapse_fig1} (c)-(d) The
latent-identifiable \gls{GMVAE} produces posteriors that are substantially
more informative than
\gls{GMVAE} when fit to fashion MNIST. It also achieves higher test
log likelihood.~\label{fig:fashionmist}}
\end{figure}

The construction of \gls{IDVAE} reveals a general strategy to make the
latent variables of generative models identifiable: replacing
nonlinear mappings with injective nonlinear mappings. We can employ
this strategy to make the latent variables of many other \gls{VAE}
variants identifiable. Below we give two examples, mixture \gls{VAE}
and sequential \gls{VAE}.

The mixture \gls{VAE}, with \gls{GMVAE} as a special case, models the
data with an exponential family mixture and mapped through a flexible
neural network to generate the data. We develop its latent-identifiable
counterpart using Brenier maps.
\begin{exmp}[\Gls{IDMVAE}]\label{defn:id-mixture-vae} An \gls{IDMVAE} generates a
$D$-dimensional datapoint $x_i, i\in\{1,
\ldots, n\}$ by
\begin{align}
% \label{eq:id-vae-mixture}
z_i \sim \mathrm{Categorical}(1/K), \quad 
w_i \g z_i \sim \mathrm{EF}(w_i\g {\beta_{1}^\top \, z_i }), \quad
% \label{eq:id-vae-mixture-2}
x_i \g w_i &\sim \mathrm{EF}(x_i\g g_{2,\theta}(\beta_{2}^\top \,g_{1,\theta}(w_i) )),
\label{eq:idmixturevae-last-layer}
\end{align}
where $W_i$ is a $K$-dimensional one-hot vector that indicates the
cluster assignment. The parameters of the model are
$\theta~=~(g_{1,\theta}, g_{2,\theta})$, where the functions
$g_{1,\theta}:
\mathbb{R}^{M}\rightarrow \mathbb{R}^{M}$ and $g_{2,\theta}:
\mathbb{R}^{D}\rightarrow \mathbb{R}^{D}$ are two continuous Brenier
maps. The matrices $\beta_{1}$ and $\beta_{2}$ are a $K\times
M$-dimensional matrix $(M\geq K)$ and a $M\times D$-dimensional matrix
$(D \geq M)$ respectively, both having all the main diagonal entries
being one and all other entries being zero.
\end{exmp}

The \gls{IDMVAE} differs from the classical mixture
\gls{VAE} in $p(x_i\g z_i)$, where we replace its neural network
mapping with its injective counterpart, i.e. a composition of two
Brenier maps and a matrix multiplication $g_{2,\theta}(\beta_{2}^\top
\,g_{1,\theta}(\cdot))$. As a special case, setting both exponential
families in \Cref{defn:id-mixture-vae} as Gaussian gives us
\gls{IDGMVAE}, which we will use to model images in
\Cref{sec:empirical}.

Next we derive the identifiable counterpart of sequential \gls{VAE},
which models the data with an autoregressive model conditional on the
latents.

\begin{exmp}[\Gls{IDSVAE}]\label{defn:id-svae} An \gls{IDSVAE}
generates a $D$-dimensional datapoint $x_i, i\in\{1,
\ldots, n\}$ by
\begin{align*}
% \label{eq:id-vae-mt}
z_i \sim p(z_i), \qquad
x_i \g z_i, x_{<i} \sim \mathrm{EF}(g_{2,\theta}(\beta_{2}^\top \,g_{1,\theta}([z_i, f_\theta(x_{<i})]) )),
\end{align*}
where $x_{<i} = (x_1, \ldots, x_{i-1})$ represents the history of $x$
before the $i$th dimension. The function $f_\theta:
\mathcal{X}_{<i}\rightarrow
\mathbb{R}^{H}$ maps the history $X_{<i}$ into an $H$-dimensional
vector. Finally, $[z_i, f_\theta(x_{<i})]$ is an $(K+H)\times 1$
vector that represents a row-stack of the vectors $(z_i)_{K\times 1}$
and $(f_\theta(x_{<i}))_{H\times 1}$.
\end{exmp}

Similar with mixture \gls{VAE}, the \gls{IDSVAE} also differs from
sequential \gls{VAE} only in its use of $g_{2,\theta}(\beta_{2}^\top
\,g_{1,\theta}(\cdot))$ function in $p(x_i \g z_i, x_{<i})$. We will
use \gls{IDSVAE} to model text in \Cref{sec:empirical}.

%%% Local Variables:
%%% mode: latex
%%% TeX-master: "collapse_id"
%%% End:

% !TEX root = collapse_id.tex
\section{Empirical studies}
\label{sec:empirical}

% !TEX root = ../collapse_ID.tex

\begin{table}[t]
\footnotesize
  \begin{center}
      \begin{tabular}{lcccccccccccc} 
     \toprule
      &\multicolumn{4}{c}{Fashion-MNIST}&&\multicolumn{4}{c}{Omniglot}\\
       & \textbf{AU}  & \textbf{KL} & \textbf{MI} & \textbf{LL}&& 
       & \textbf{AU}  & \textbf{KL} & \textbf{MI} & \textbf{LL}\\
          \midrule
          \gls{VAE}~\citep{diederik2014auto}& 0.1& 0.2& 0.9& -258.8
          &&&0.02&0.0&0.1&-862.1\\
        SA-\gls{VAE}~\citep{kim2018semi} & 0.2& 0.3& 1.3& -252.2
        &&& 0.1 & 0.2 &1.0  &-853.4  \\
        Lagging \gls{VAE}~\citep{he2019lagging} & 0.4& 0.6& 1.6& -248.5 
        &&& 0.5 &  1.0& 3.6 & -849.4 \\
        $\beta$-\gls{VAE}~\citep{higgins2016beta} ($\beta$=0.2)&0.6&1.2&2.4&-245.3
        &&&0.7&1.4&5.9&-842.6\\        
        \gls{IDGMVAE} (this work) &\bfseries{1.0}&\bfseries{1.6} &\bfseries{2.6}&\bfseries{-242.3}
        &&&\bfseries{1.0}&\bfseries{1.7}&\bfseries{7.5}&\bfseries{-820.3}\\
\bottomrule
    \end{tabular}
    \begin{tabular}{lcccccccccccccccccc} 
     \toprule
      &\multicolumn{4}{c}{Synthetic}&\multicolumn{4}{c}{Yahoo}&\multicolumn{4}{c}{Yelp}\\
       & \textbf{AU}  & \textbf{KL} & \textbf{MI} & \textbf{LL}& \textbf{AU}  & \textbf{KL} & \textbf{MI} & \textbf{LL}& \textbf{AU}  & \textbf{KL} & \textbf{MI} & \textbf{LL} \\
          \midrule
          \gls{VAE}~\citep{diederik2014auto}& 0.0&0.0&0.0&-46.5& 0.0&0.0&0.0&-519.7& 0.0&0.0&0.0&-635.9\\
        SA-\gls{VAE}~\citep{kim2018semi} & 0.4& 0.1& 0.1& -40.2
        &0.2 & 1.0&  0.2& -520.2 &  0.1& 1.9& 0.2&-631.5\\
        Lagging \gls{VAE}~\citep{he2019lagging} &0.5& 0.1& 0.1 & -40.0
        &0.3 & 1.6&  0.4& \bfseries{-518.6} &  0.2& 3.6& 0.1& \bfseries{-631.0}\\          
        $\beta$-\gls{VAE}~\citep{higgins2016beta}
        ($\beta$=0.2)&\bfseries{1.0}&0.1&0.1&
        \bfseries{-39.9}
        &0.5&4.7&0.9&-524.4
        &0.3&\bfseries{10.0}&0.1&-637.3\\
        \gls{IDSVAE} &\bfseries{1.0}&\bfseries{0.5}& \bfseries{0.6}&-40.3
        &\bfseries{0.8}&\bfseries{7.2}& \bfseries{1.1}&-519.5
        &\bfseries{0.7}&9.1&\bfseries{0.9}&-634.2
        \\
\bottomrule
    \end{tabular}
    \vspace{0.2em}
    \caption{Across image and text datasets,
    \gls{IDVAE} outperforms existing \gls{VAE} variants in
    preventing posterior collapse while achieving similar
    goodness-of-fit to the data.
    \label{tab:idvae-empirical}}
  \end{center}
\end{table}

We study \gls{IDVAE} on images and text datasets, finding that
\gls{IDVAE} do not suffer from posterior collapse as we increase the
capacity of the generative model, while achieving similar fits to the
data. We further study
\gls{PPCA}, showing how likelihood functions nearly constant in latent
variables lead to collapsing posterior even with
\gls{MCMC}.

\subsection{\gls{IDVAE} on images and text}

We consider three metrics for evaluating posterior collapse: (1)
\gls{KL} divergence between the posterior and the prior,
$\gls{KL}(q(\mbz\g \mbx)|| p(\mbz))$; (2) Percentange of
\gls{AU}:$\gls{AU} =
\sum_{d=1}^D\mathbb{1}\{\mathrm{Cov}_{p(\mbx)}(\E{q(\mbz\g\mbx)}{\mbz_d})\geq
\epsilon\},$ where $\mbz_d=(z_{1d}, \ldots, z_{nd})$ is the $d$th
dimension of the latent variable $\mbz$ for all the $n$ data points.
In calculating \gls{AU}, we follow \citet{burda2015importance} to
calculate the posterior mean, $(\E{}{z_{1d} \g \mbx_1}, \ldots,
\E{}{z_{nd} \g \mbx_n])}$ for all data points, and calculate the
sample variance of $\E{}{z_{id} \g \mbx_i}$ across $i$'s from this
vector. The threshold $\epsilon$ is chosen to be
0.01~\citep{burda2015importance}; the theoretical maximum of $\%
\gls{AU}$ is one; (3) Approximate \Gls{MI} between $\mbx_i$ and
$\mbz_i$, $I(\mbx, \mbz) =
\E{\mbx}{\E{q(\mbz\g \mbx)}{\log(q(\mbz\g \mbx))}} -
\E{\mbx}{\E{q(\mbz\g \mbx)}{\log(q(\mbz))}}$. We also evaluate the
model fit using the importance weighted estimate of log-likelihood on
a held-out test set~\citep{burda2015importance}. For mixture
\gls{VAE}, we also evaluate the predictive accuracy of the categorical
latents against ground truth labels to quantify their informativeness.

\parhead{Competing methods.} We compare \gls{IDVAE} with the classical
\gls{VAE}~\citep{diederik2014auto}, the $\beta$-\gls{VAE}
($\beta$=0.2)~\citep{higgins2016beta}, the semi-amortized
\gls{VAE}~\citep{kim2018semi}, and the lagging
\gls{VAE}~\citep{he2019lagging}. Throughout the empirical studies, we
use flexible variational approximating families
(RealNVPs~\citep{dinh2016density} for image and
LSTMs~\citep{hochreiter1997long} for text).

\parhead{Results: Images.} We first study \gls{IDGMVAE} on four
subsampled image datasets drawn from
pinwheel~\citep{johnson2016composing}, MNIST~\citep{lecun2010mnist},
Fashion MNIST~\citep{xiao2018fashionmnist}, and
Omniglot~\citep{lake2015human}.
\Cref{fig:pinwheel_nonid,fig:pinwheel_id} illustrate a fit of the
\gls{GMVAE} and the \gls{IDGMVAE} to the pinwheel
data~\citep{johnson2016composing}. The posterior of the \gls{GMVAE}
latents collapse, attributing all datapoints to the same latent
cluster. In contrast, \gls{IDGMVAE} produces categorical latents
faithful to the clustering structure. \Cref{fig:fashionmist} examines
the \gls{IDGMVAE} as we increase the flexibility of the generative
model. \Cref{fig:fashionmist_acc} shows that the categorical latents
of the \gls{IDGMVAE} are substantially more predictive of the true
labels than their classical counterparts. Moreover, its performance
does not degrade as the generative model becomes more flexible.
\Cref{fig:fashionmist_elbo} shows that the \gls{IDGMVAE} consistently
achieve higher test log-likelihood. \Cref{tab:idvae-empirical}
compares different variants of
\gls{VAE} in a 9-layer generative model. Across four datasets,
\gls{IDGMVAE} mitigates posterior collapse. It achieves higher
\gls{AU}, \gls{KL} and \gls{MI} than other variants of \gls{VAE}. It
also achieves a higher test log-likelihood.

\parhead{Results: Text.} We apply \gls{IDSVAE} to three subsampled
text datasets drawn from a synthetic text dataset, the Yahoo dataset,
and the Yelp dataset~\citep{yang2017improved}. The synthetic dataset
is generated from a classical two-layer sequential \gls{VAE} with a
five-dimensional latent. \Cref{tab:idvae-empirical} compares the
\gls{IDSVAE} with the sequential \gls{VAE}. Across the three text
datasets, the \gls{IDSVAE} outperforms other variants of \gls{VAE} in
mitigating posterior collapse, generally achieving a higher \gls{AU},
\gls{KL}, and \gls{MI}.

\subsection{Latent variable non-identifiability and posterior collapse
in PPCA}

\begin{figure}
\centering
\begin{subfigure}{0.25\textwidth}
\centering
\includegraphics[width=\textwidth]{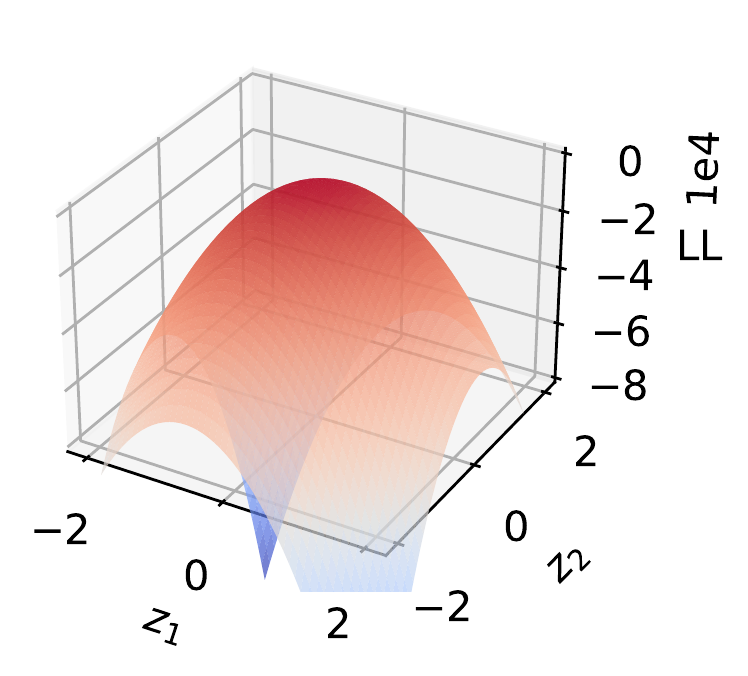}
\includegraphics[width=0.7\textwidth]{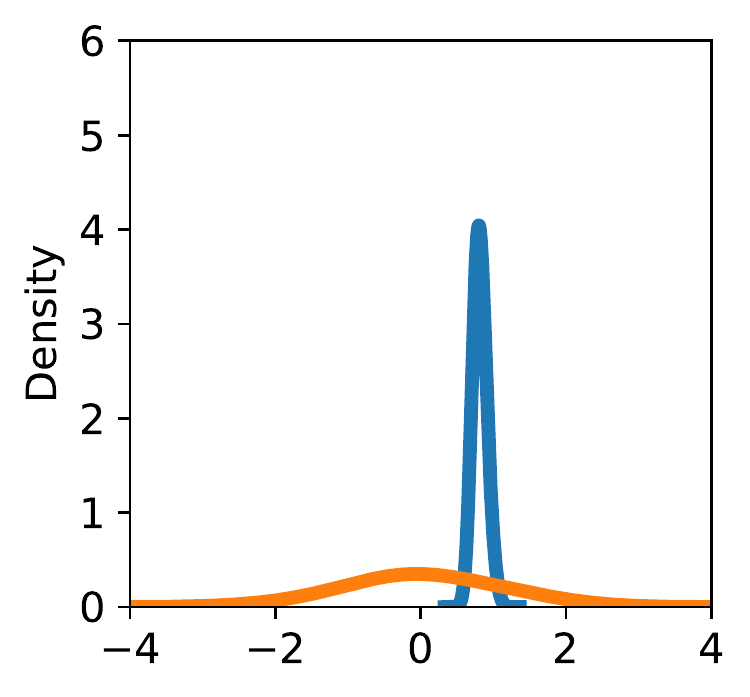}
\caption{$\sigma=0.2$ \label{fig:ppca-varsigma_ll_log2}}
\end{subfigure}%
\begin{subfigure}{0.25\textwidth}
\centering
\includegraphics[width=\textwidth]{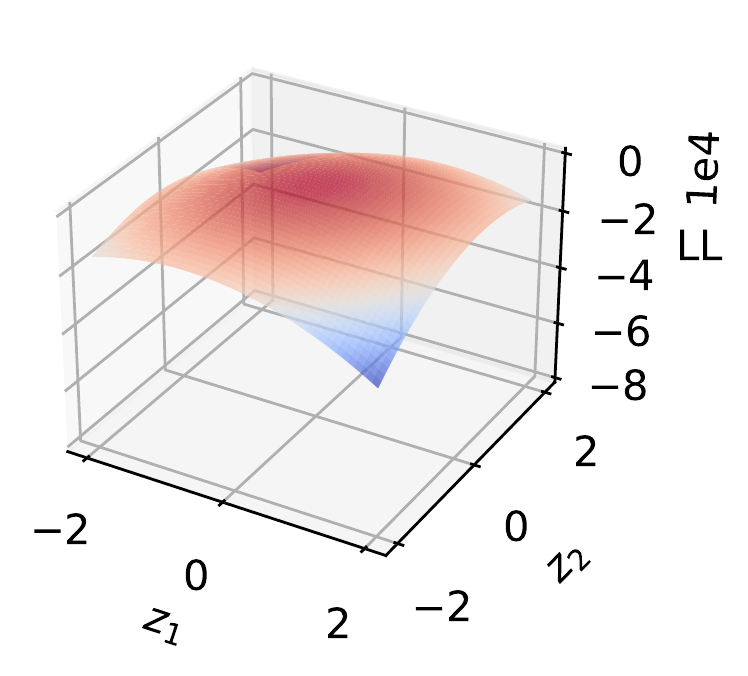}
\includegraphics[width=0.7\textwidth]{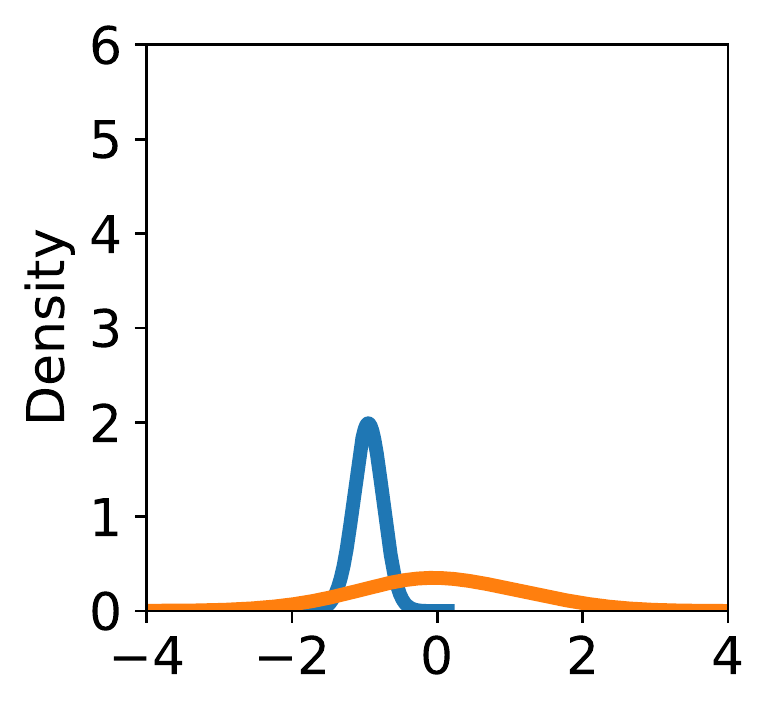}
\caption{$\sigma=0.5$ \label{fig:ppca-varsigma_ll_log5}}
\end{subfigure}%
\begin{subfigure}{0.25\textwidth}
\centering
\includegraphics[width=\textwidth]{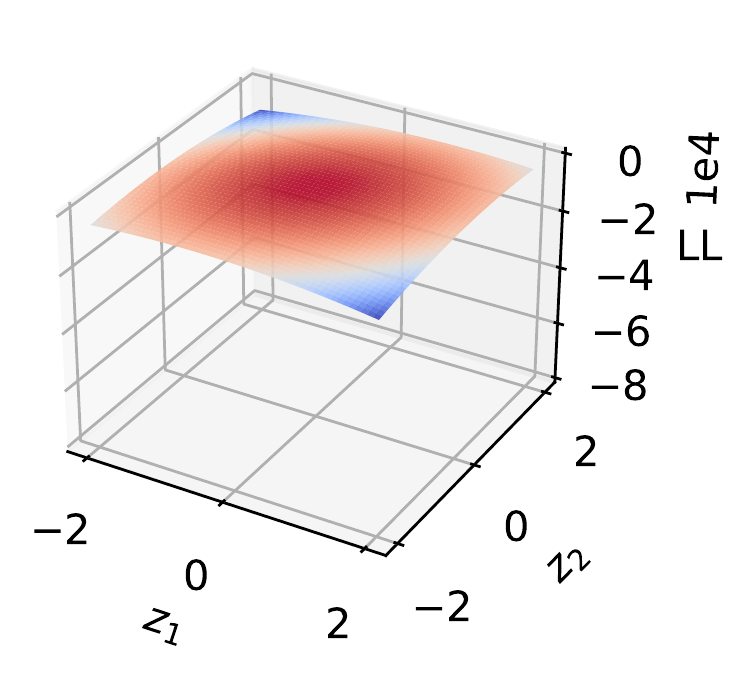}
\includegraphics[width=0.7\textwidth]{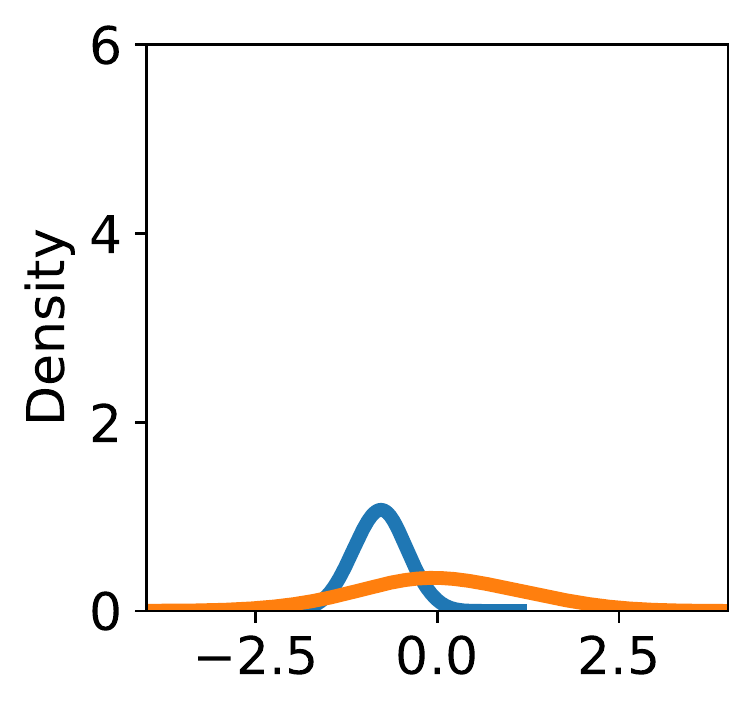}
\caption{$\sigma=1.0$ \label{fig:ppca-varsigma_ll_log10}}
\end{subfigure}%
\begin{subfigure}{0.25\textwidth}
\centering
\includegraphics[width=\textwidth]{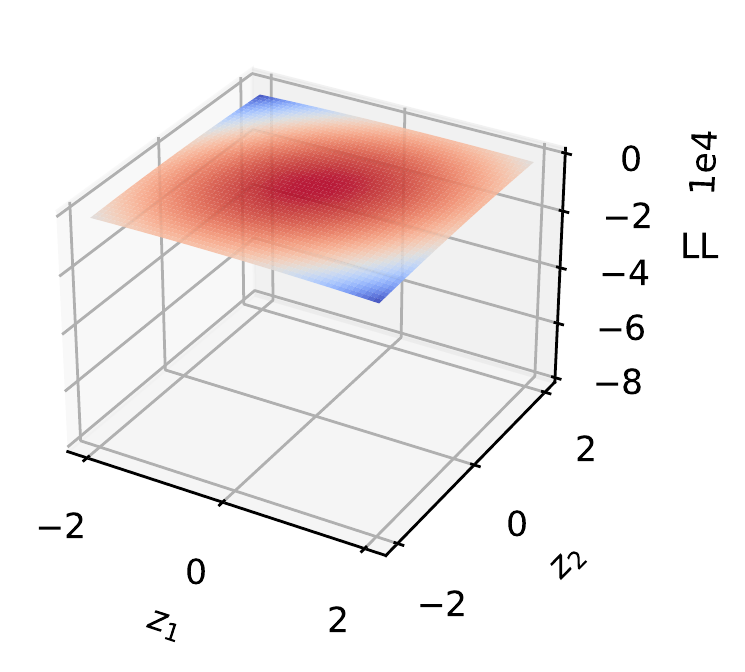}
\includegraphics[width=0.7\textwidth]{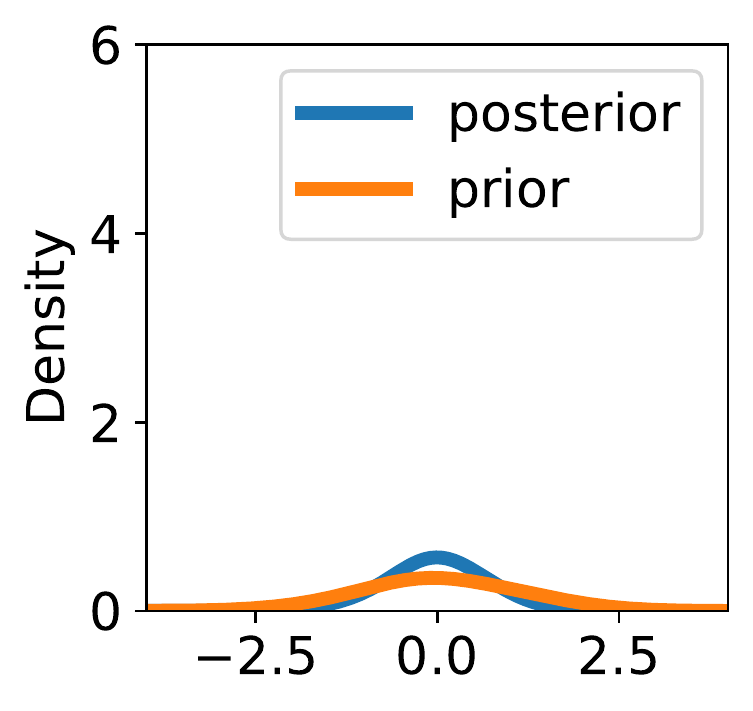}
\caption{$\sigma=1.5$ \label{fig:ppca-varsigma_ll_log10}}
\end{subfigure}%
\caption{As the noise level increases in \gls{PPCA}, the latent
variable becomes closer to non-identifiable because the likelihood and
more susceptible to posterior collapse. Its likelihood surface becomes
flatter and its posterior becomes closer to the prior. Top panel:
Likelihood surface of \gls{PPCA} as a function of the two latents
$z_1, z_2$. When $\sigma$ increase, the likelihood surface becomes
flatter and the latent variables $z_1, z_2$ are closer to
non-identifiable. Bottom panel: Posterior of $z_1$ under different
$\sigma$ values. When $\sigma$ increase, the posterior becomes closer
to the prior. \label{fig:ppca-varsigma}}
\end{figure}

Here we show that the PPCA posterior becomes close to the prior when
the latent variable becomes close to be non-identifiable. We perform
inference using
\gls{HMC}, avoiding the effect of variational approximation on
posterior collapse.

Consider a \gls{PPCA} with two latent dimensions, $p(z_i) = \cN(z_i\s
0, I_2), \,  p(x_i\g z_i \s \theta) = \cN(x_i\s z_i^\top w,
\sigma^2\cdot I_5),$ where the value of $\sigma^2$ is known, $z_i$'s
are the latent variables of interest, and $w$ is the only parameter of
interest. When the noise $\sigma^2$ is set to a large value, the
latent variable $z_i$ may become nearly non-identifiable. The reason
is that the likelihood function $p(x_i\g z_i)$ becomes slower-varying
as $\sigma^2$ increases. For example, \Cref{fig:ppca-varsigma} shows
that the likelihood surface becomes flatter as $\sigma^2$ increases.
Accordingly, the posterior becomes closer to the prior as $\sigma^2$
increases. When $\sigma=1.5$, the posterior collapses. This
non-identifiability argument provides an explanation to the closely
related phenomenon described in Section 6.2 of \citep{lucas2019don}.

%%% Local Variables: 
%% mode: latex 
%% TeX-master: "main" 
%% End:

% !TEX root = collapse_id.tex

\section{Discussion}
\label{sec:discussion}
In this work, we show that the posterior collapse phenomenon is a
problem of latent variable non-identifiability.  It is not specific to
the use of neural networks or particular inference algorithms in
\gls{VAE}. Rather, it is an intrinsic issue of the model and the
dataset. To this end, we propose a class of \gls{IDVAE} via Brenier
maps to resolve latent variable non-identifiability and mitigate
posterior collapse. Across empirical studies, we find that \gls{IDVAE}
outperforms existing methods in mitigating posterior collapse. 

The latent variables of \gls{IDVAE} are guaranteed to be identifiable.
However, it does not guarantee that the latent variables and the
parameters of \gls{IDVAE} are jointly identifiable. In other words,
the \gls{IDVAE} model may not be identifiable even though its latents
are identifiable. This difference between latent variable
identifiability and model identifiability may appear minor. But the
tractability of resolving latent variable identifiability plays a key
role in making non-identifiability a fruitful one perspective of
posterior collapse. To enforce latent variable identifiability, it is
sufficient to ensure that the likelihood $p(\mbx\g \mbz,
\hat{\theta})$ is an injective function of $\mbz$. In contrast,
resolving model identifiability for the general class of
\gls{VAE} remains a long standing open problem, with some recent
progress relying on auxiliary
variables~\citep{khemakhem2019variational,khemakhem2020icebeem}. The
tractability of resolving latent variable identifiability is a key
catalyst of a principled solution to mitigating posterior collapse.

There are a few limitations of this work. One is that the theoretical
argument focuses on the collapse of the exact posterior. The rationale
is that, if the exact posterior collapses, then its variational
approximation must also collapse because variational approximation of
posteriors cannot ``uncollapse'' a posterior. That said, variational
approximation may ``collapse'' a posterior, i.e. the exact posterior
does not collapse but the variational approximate posterior collapses.
The theoretical argument and algorithmic approaches developed in this
work does not apply to this setting, which remains an interesting
venue of future work.

A second limitation is that the latent-identifiable \gls{VAE}
developed in this work bear a higher computational cost than classical
\gls{VAE}. While the latent-identifiable \gls{VAE} ensures the
identifiability of its latent variables and mitigates posterior
collapse, it does come with a price in computation because its
generative model (i.e. decoder) is parametrized using gradients of a
neural network. Fitting the latent-identifiable \gls{VAE} thus
requires calculating gradients of gradients of a neural network,
leading to much higher computational complexity than fitting the
classifical \gls{VAE}. Developing computationally efficient variants
of the latent-identifiable \gls{VAE} is another interesting direction
for future work.

\parhead{Acknowledgments. } We thank Taiga Abe and Gemma Moran for
helpful discussions, and anonymous reviewers for constructive feedback
that improved the manuscript. David Blei is supported by ONR
N00014-17-1-2131, ONR N00014-15-1-2209, NSF CCF-1740833, DARPA SD2
FA8750-18-C-0130, Amazon, and the Simons Foundation. John Cunningham
is supported by the Simons Foundation, McKnight Foundation, Zuckerman
Institute, Grossman Center, and Gatsby Charitable Trust.

\clearpage
{\small\putbib[BIB1]}
\end{bibunit}
\clearpage

\clearpage
\setcounter{page}{1}
\begin{bibunit}[alp]
% !TEX root = collapse_id.tex

\onecolumn
{\Large\textbf{Supplementary Materials\\ \\Posterior Collapse and Latent
Variable Non-identifiability}}

\appendix

% !TEX root = collapse_id.tex
\section{Examples of posterior collapse continued}

We present two additional examples of posterior collapse,
probabilistic principal component analysis and Gaussian mixture model.

\label{exmp:PPCA}

\glsreset{PPCA}
\subsection{Probabilistic principal component analysis} 

We consider classical \gls{PPCA} and show that its local latent
variables can suffer from posterior collapse at maximum likelihood
parameter values (i.e. global maxima of log marginal likelihood). This
example refines the perspective of \citet{lucas2019don}, which
demonstrated that posterior collapse can occur in \gls{PPCA} absent
any variational approximation but due to local maxima in the log
marginal likelihood. Here we show that posterior collapse can occur
even with global maxima, absent optimization issues due to local
maxima.

Consider a \gls{PPCA} with two latent dimensions,
\begin{align*}
  p(z_i) &= \cN(z_i\g 0, I_2), \\
  p(x_i\g z_i \s \theta) &= \cN(x_i\g z_i^\top w, \sigma^2\cdot I_5),
\end{align*}
where $z_i$'s are the latent variables of interest and others
$\theta=(w, \sigma^2)$ are parameters of the model.

Consider fitting this model to two datasets, each with 500 samples,
focusing on maximum likelihood parameter values.  Depending on the
true distribution of the dataset, \gls{PPCA} may or may not suffer
from posterior collapse.

\begin{figure}[b]
\centering
\begin{subfigure}{0.345\textwidth}
\includegraphics[width=\textwidth]{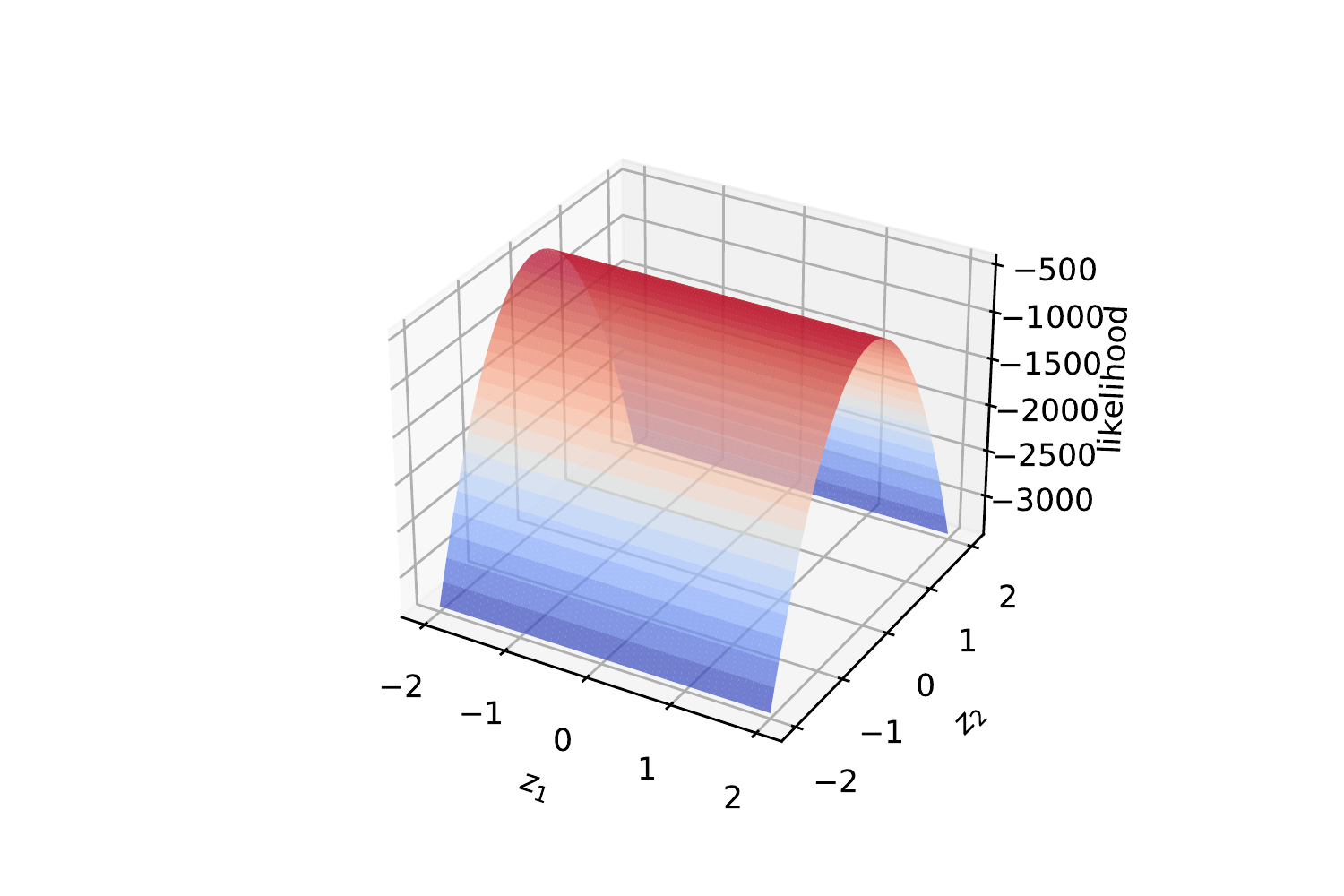}
\caption{Likelihood (1D PPCA) \label{fig:oned_ppca_ll}}
\end{subfigure}%
\begin{subfigure}{0.33\textwidth}
\includegraphics[width=\textwidth]{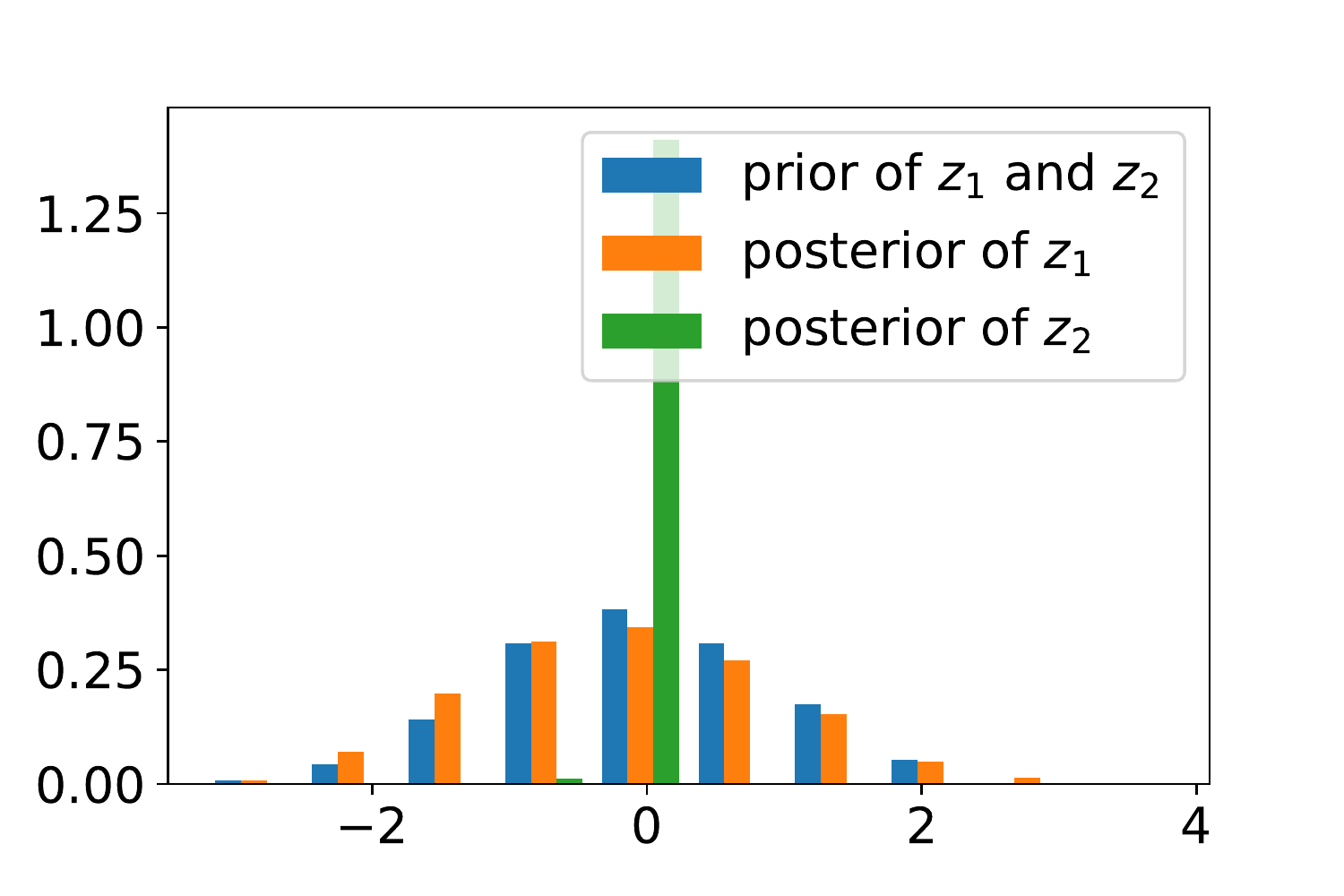}
\caption{Posterior (1D PPCA) \label{fig:oned_ppca_posterior}}
\end{subfigure}%

\begin{subfigure}{0.345\textwidth}
\includegraphics[width=\textwidth]{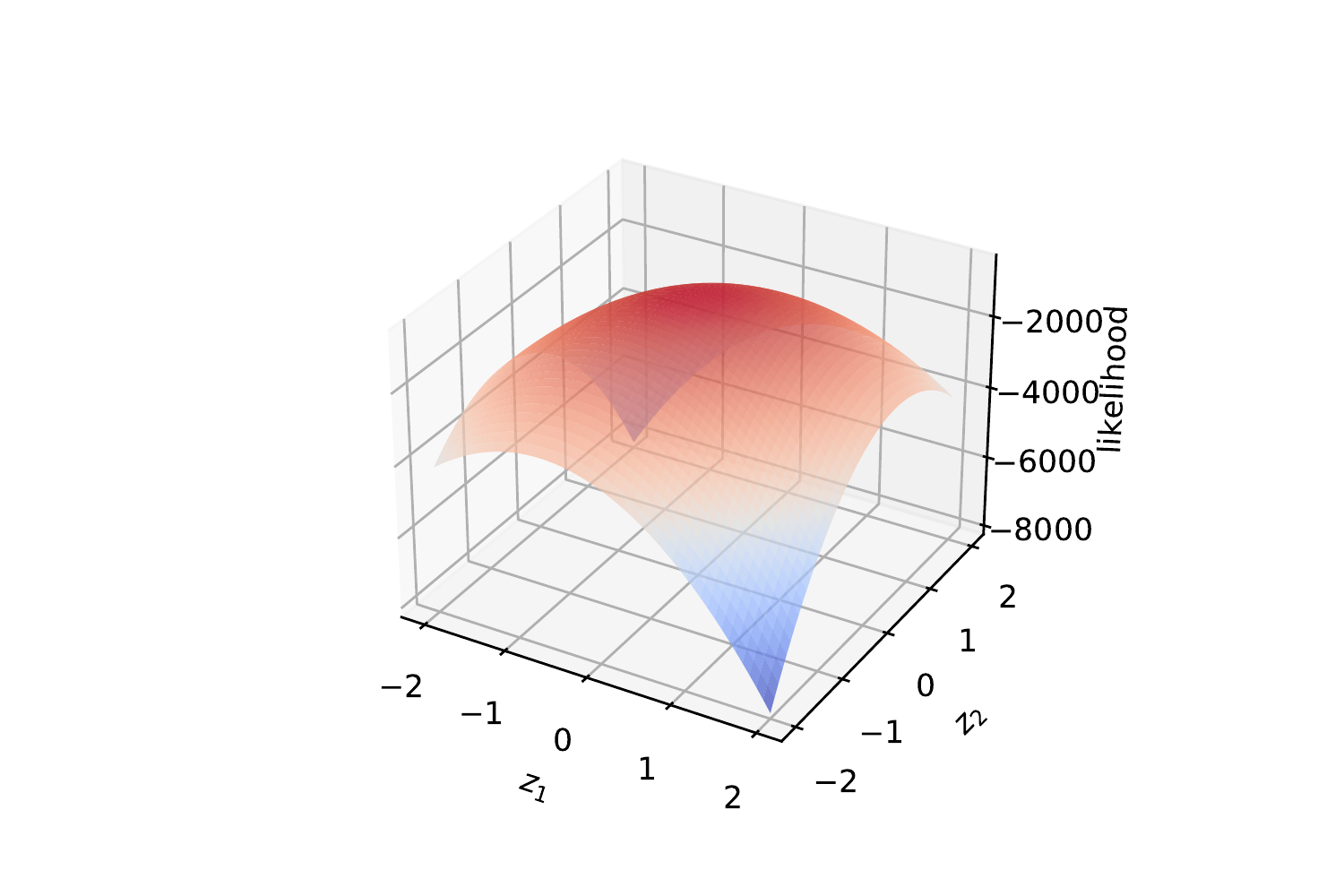}
\caption{Likelihood (2D PPCA) \label{fig:twod_ppca_ll}}
\end{subfigure}
\begin{subfigure}{0.33\textwidth}
\includegraphics[width=\textwidth]{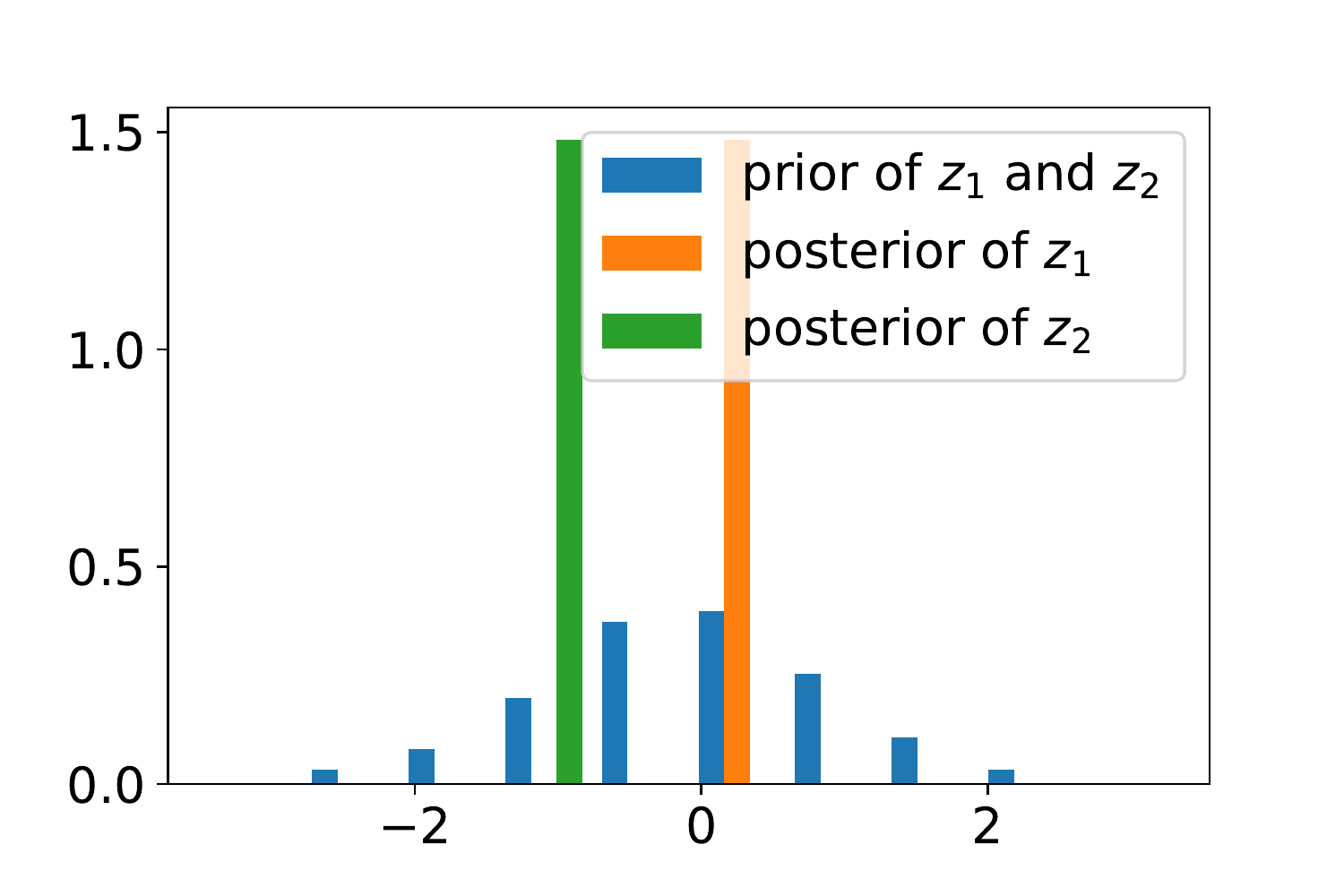}
\caption{Posterior (2D PPCA) \label{fig:twod_ppca_posterior}}
\end{subfigure}
\caption{Fitting \gls{PPCA} with more latent dimensions than enough
leads to non-identifiable local latent variables and collapsed
posteriors. (a)-(b) Fit a two-dimensional \gls{PPCA} to data drawn
from a one-dimensional \gls{PPCA}. The likelihood surface is constant
in one dimension of the latent variable, i.e. this latent variable is
non-identifiable. Hence its corresponding posterior collapses. (c)-(d)
Fit a two-dimensional
\gls{PPCA} to data from a two-dimensional \gls{PPCA} does not suffer
from posterior collapse; its likelihood surface varies in all
dimensions.~\label{fig:ppca_collapse}}
\end{figure}

\begin{enumerate}[leftmargin=*]
\item Sample the data from a one-dimensional \gls{PPCA},
  \begin{align}
    x_i \sim \cN(x_i\g \cN(0, I_1)\cdot\bar{w}_1, \bar{\sigma}_1\cdot
    I_5).
  \end{align} 
  (The model remains two dimensional.) The latent variables $z_i$'s
  are not (fully) identifiable in this case. The reason is that one
  set of maximum likelihood parameters is $\hat{\theta} = (\hat{w},
  \hat{\sigma}) = ([\mb{0}, \bar{w}_1],
  \bar{\sigma}_1)$, i.e. setting one latent dimension as zero and the
  other equal to the true data generating direction. Under this
  $\hat{\theta}$, the likelihood function is constant in the first
  dimension of the latent variable, i.e.  $z_{i1}$; see
  \Cref{fig:oned_ppca_ll}. The posterior of $z_{i1}$ thus collapses,
  matching the prior, while the posterior of $z_{i2}$ stays peaked
  (\Cref{fig:oned_ppca_posterior}).

\item Sample the data from from a two-dimensional \gls{PPCA},
  \begin{align}
    x_i\sim\cN(x_i\g \cN(0, I_2)\cdot\bar{w}_2, \bar{\sigma}_2\cdot
    I_5).
  \end{align}
  The latent variables $z_i$ are identifiable. The
  likelihood function varies against both $z_{i1}$ and $z_{i2}$; the
  posteriors of both $z_{i1}$ and $z_{i2}$ are
  peaked~(\Cref{fig:twod_ppca_ll,fig:twod_ppca_posterior}).
\end{enumerate}

\glsreset{GMM}
\subsection{Gaussian mixture model}
\label{exmp:GMM}

\begin{figure}
\centering
\begin{subfigure}{0.347\textwidth}
\includegraphics[width=\textwidth]{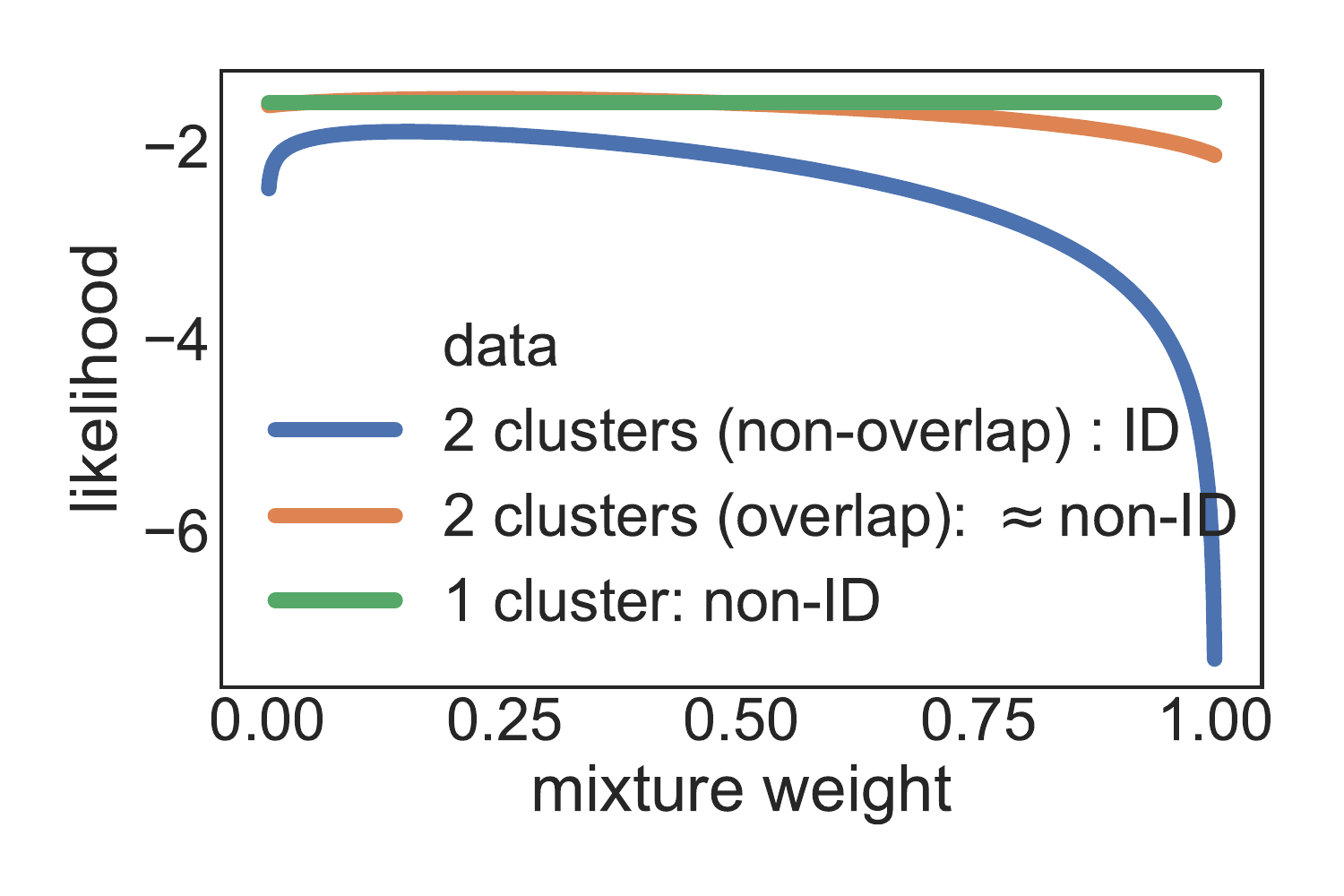}
\caption{Likelihood function \label{fig:gmm_ll}}
\end{subfigure}%
\begin{subfigure}{0.347\textwidth}
\includegraphics[width=\textwidth]{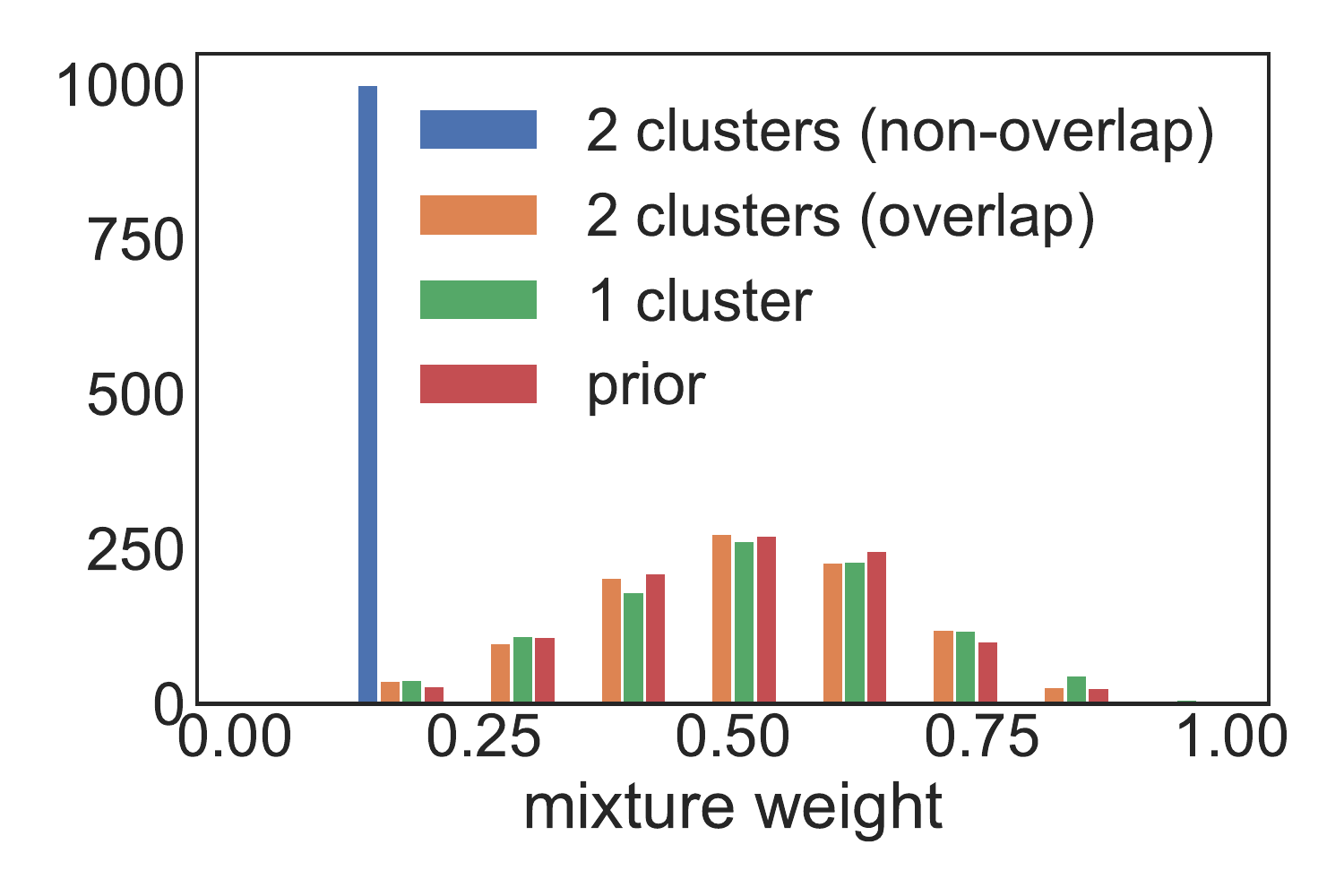}
\caption{Posterior histogram\label{fig:gmm_post}}
\end{subfigure}
\caption{When a latent variable is non-identifiable (non-ID) in a
model, its likelihood function is a constant function and its
posterior is equal to the prior, i.e. its posterior collapses. 
Consider a Gaussian mixture model with two clusters $x\sim
\alpha\cdot \cN(\mu_1,\sigma_1^2)+(1-\alpha) \cdot \cN(\mu_2,
\sigma_2^2)$, treating the mixture weight $\alpha$ as the latent
variable and others as parameters. Fit the model to datasets generated
respectively by one Gaussian cluster ($\alpha$ non-identifiable), two
overlapping Gaussian clusters ($\alpha$ nearly non-identifiable),
and two non-overlapping Gaussian clusters ($\alpha$ identifiable).
Under optimal parameters, the likelihood function $p(x\g \alpha)$ is
(close to) a constant when the latent variable $\alpha$ is (close to)
non-identifiable; its posterior is also (close to) the prior.
Otherwise, the likelihood function is non-constant and the posterior
is peaked.~\label{fig:gmm_collapse}}
\end{figure}

Though we have focused on the posterior collapse of local latent
variables, a model can also suffer from posterior collapse of its
global latent variables. Consider a simple \gls{GMM} with two clusters,
\begin{align*}
  p(\alpha) &= \mathrm{Beta}(\alpha\g 5, 5), \\
  p(x_i\g \alpha\s\theta) &= \alpha \cdot \cN(x_i\g \mu_1, \sigma_1^2) + (1-\alpha)
                            \cdot \cN(x_i \g \mu_2, \sigma_2^2).
\end{align*}
Here $\alpha$ is a global latent variable and
$\theta = (\mu_1, \mu_2, \sigma_1, \sigma_2)$ are the parameters of
the model. Fit this model to three datasets, each with $10^5$ samples.

\begin{enumerate}[leftmargin=*]
\item Sample the data from two non-overlapping clusters,
  \begin{align}
    x_i \sim 0.15\cdot\cN(-10, 1) + 0.85\cdot\cN(10, 1).
  \end{align}
  The latent variable $\alpha$ is identifiable. The two data
  generating clusters are substantially different, so the likelihood
  function varies across $\alpha\in[0,1]$ under the \gls{ML}
  parameters (\Cref{fig:gmm_ll}). The posterior of $\alpha$ is also
  peaked (\Cref{fig:gmm_post}) and differs much from the prior.

\item Sample the data from two overlapping clusters,
  \begin{align}
    x_i\sim 0.15\cdot\cN(-0.5, 1) + 0.85\cdot\cN(0.5, 1).
  \end{align}
  The latent variable $\alpha$ is identifiable. However, it is nearly
  non-identifiable. While the two data generating clusters are
  different, they are very similar to each other because they overlap.
  Therefore, the likelihood function $p(x_i\g \alpha\s \theta^*)$ is
  slowly varying under \gls{ML} parameters
  $\theta^* = (\mu_1^*, \mu_2^*, \sigma_1^*, \sigma_2^*) = (-0.5, 0.5,
  1, 1)$; see \Cref{fig:gmm_ll}.  Consequently, the posterior of
  $\alpha$ remains very close to the prior; see \Cref{fig:gmm_post}.

\item Sample the data from a single Gaussian distribution,
  $x_i \sim \cN(-1, 1)$.  The latent variable $\alpha$ is
  non-identifiable. The reason is that one set of \gls{ML} parameters
  is
  $\theta^* = (\mu_1^*, \mu_2^*, \sigma_1^*, \sigma_2^*) = (-1, -1, 1,
  1)$, i.e. setting both of the two mixture components equal to the
  true data generating Gaussian distribution.

  Under this $\theta^*$, the latent variable $\alpha$ is
  non-identifiable and its likelihood function
  $p(\{x_i\}_{i=1}^n\g \alpha \s \theta^*)$ is constant in $\alpha$
  because the two mixture components are equal; \Cref{fig:gmm_ll}
  illustrates this fact. Moreover, the posterior of $\alpha$
  collapses,
  $p(\alpha \g \{x_i\}_{i=1}^n \s \theta^*) =
  p(\alpha).$~\Cref{fig:gmm_post} illustrates this fact: The \gls{HMC}
  samples of the $\alpha$ posterior closely match those drawn from the
  prior. (Exact inference is intractable in this case, so we use
  \gls{HMC} as a close approximation to exact inference.) This example
  demonstrates the connection between non-identifiability and
  posterior collapse; it also shows that posterior collapse is not
  specific to variational inference but is an issue of the model and
  the data.

\end{enumerate}

As for PPCA, these GMM examples demonstrate that whether a latent
variable is identifiable in a probabilistic model not only depends on
the model but also the data. While all three examples were fitted with
the same \gls{GMM} model, their identifiability situation differs as
the samples are generated in different ways.

\section{Proof of \Cref{prop:IDVAE}}
\label{sec:proof-IDVAE}

We prove a general version of \Cref{prop:IDVAE} by establishing the
latent variable identifiability and flexibility of the most general
form of the \gls{IDVAE}. The \gls{IDVAE}, \gls{IDMVAE}, and
\gls{IDSVAE} (\Cref{defn:idvae,defn:id-mixture-vae,defn:id-svae}) will
all be its special cases. Then \Cref{prop:IDVAE} will also be a
special case of the more general result stated
below~(\Cref{prop:IDVAE-general}).

We first define the most general form of \gls{IDVAE}.

\begin{defn}[General \gls{IDVAE} via Brenier maps]\label{defn:idvae}
A general
\gls{IDVAE} via Brenier maps generates an $D$-dimensional data-point
$x_i, \in\{1,
\ldots, n\}$ by:
\begin{align}
(z_i)_{K\times 1} &\sim p(z_i), \label{eq:id-vae-mt-1}\\
(w_i)_{M\times 1}\g z_i &\sim \mathrm{EF}(w_i\g {\beta_{1}^\top \, z_i }),\label{eq:id-vae-mt-2}\\
(x_i)_{D\times 1}\g w_i, x_{<i} &\sim \mathrm{EF}(x_i\g h\circ
g_{2,\theta}(\beta_{2}^\top \,g_{1,\theta}([w_i, f_\theta(x_{<i})])
)),\label{eq:id-vae-mt-3}
\end{align}
where $\mathrm{EF}$ stands for exponential family distributions; $z_i$
is a $K$-dimensional latent variable, discrete or continuous. The
parameters of the model are $\theta~=~(g_{1,\theta}, g_{2,\theta},
f_\theta)$, where $f_\theta: \mathcal{X}_{<i}\rightarrow
\mathbb{R}^{H}$ is a function that maps all previous data points
$x_{<i}$ to an $H$-dimensional vector, $g_{1,\theta}:
\mathbb{R}^{M+H}\rightarrow \mathbb{R}^{M+H}$ and $g_{2,\theta}:
\mathbb{R}^{D}\rightarrow \mathbb{R}^{D}$ are two continuous monotone
transport maps. The function $h(\cdot)$ is a bijective link function
for the exponential family, e.g. the sigmoid function. The matrix
$\beta_{1}$ is a $K\times M$-dimensional matrix $(M\geq K)$ all the
main diagonal entries being one and all other entries being zero, and
thus with full row rank. Similarly, $\beta_{2}$ is a $(M+H)\times
D$-dimensional matrix $(D \geq M+H)$ with all the main diagonal
entries being one and all other entries being zero, also with full row
rank. Finally, $[w_i, f_\theta(x_{<i})]$ is an $(M+H)\times 1$ vector
that represents a row-stack of the vectors $(w_i)_{M\times 1}$ and
$(f_\theta(x_{<i}))_{H\times 1}$.
\end{defn}

The general \gls{IDVAE} differs from the classical \gls{VAE} whose
general form is
\begin{align}
(z_i)_{K\times 1} &\sim p(z_i), \label{eq:classical-vae-mt-1}\\
(w_i)_{M\times 1}\g z_i &\sim \mathrm{EF}(w_i\g {\beta_{1}^\top \, z_i }),\label{eq:classical-vae-mt-2}\\
(x_i)_{D\times 1}\g w_i, x_{<i} &\sim \mathrm{EF}(x_i\g h\circ
g_\theta([w_i, f_\theta(x_{<i})])),\label{eq:classical-vae-mt-3}
\end{align}
The key difference is in \Cref{eq:classical-vae-mt-3}, where the
classical \gls{VAE} uses an arbitrary function
$g:\mathbb{R}^{M+H}\rightarrow
\mathbb{R}^{D}$ in \Cref{eq:classical-vae-mt-3}. In contrast,
\gls{IDVAE} uses a composition $g_{2,\theta}(\beta_{2}^\top
\,g_{1,\theta}(\cdot))$ with additional constraints in
\Cref{eq:id-vae-mt-3}.

General \gls{IDVAE} can handle both i.i.d. and sequential data. For
i.i.d data (e.g. images), we can set $f_\theta(\cdot)$ to be a zero
function, which implies $P(x_i\g w_i, x_{<i}) = P(x_i\g w_i)$. For
sequential data (e.g. text), we can set $f_\theta(\cdot)$ to be an
LSTM that embeds the history $x_{<i}$ into an $H$-dimensional vector.

General \gls{IDVAE} emulate many existing \gls{VAE}. Letting $z_i$ be
categorical (one-hot) vectors, the distribution $\mathrm{EF}(z_i^\top
\beta_\theta)$ is an exponential family mixture. The identifiable
\gls{VAE} then maps this mixture model through a flexible function
$g_\theta$. When $z_i$ is real-valued, it mimics classical \gls{VAE}
by mapping an exponential family PCA through flexible functions.

\gls{IDGMVAE} is a special case of the general \gls{IDVAE} when we set
 $z_i$ be categorical (one-hot) vectors, set the exponential family
 distribution $\mathrm{EF}$ to be Gaussian in
\Cref{eq:id-vae-mt-2,eq:id-vae-mt-3}. In this case, $w_i\sim
\mathrm{Gaussian}(z_i^\top \beta_\theta, \gamma_\theta)$ is a
Gaussian mixture. Then, we set $f_\theta(\cdot)$ to be a zero
function, which implies $P(x_i\g w_i, x_{<i}) = P(x_i\g w_i)$, and
finally set $h$ as the identity function.

This general \gls{IDVAE} also subsumes the Bernoulli mixture model,
which is a common variant of \gls{IDGMVAE} for the MNIST data.
Specifically, we can set $z_i$ be categorical (one-hot) vectors, and
then set the exponential family distribution $\mathrm{EF}$ to be
Gaussian in \Cref{eq:id-vae-mt-2}, making $w_i\sim
\mathrm{Gaussian}(z_i^\top \beta_\theta, \gamma_\theta)$ to be a
Gaussian mixture. Next we set $f_\theta(\cdot)$ to be a zero function,
which implies $P(x_i\g w_i, x_{<i}) = P(x_i\g w_i)$, then set $h$ to
be the sigmoid function, and finally set the $\mathrm{EF}$ to be
Bernoulli in \Cref{eq:id-vae-mt-3}.

\gls{IDSVAE} is another special case of the general \gls{IDVAE} when
we set the $\mathrm{EF}$ to be a point mass and $\beta_{1,\theta}$ to
be identity matrix in \Cref{eq:id-vae-mt-2}, which implies $w_i=z_i$.
Then setting the $\mathrm{EF}$ to be a categorical distribution and
$h$ to be identity in \Cref{eq:id-vae-mt-3} leads to a configuration
that is the same as \Cref{defn:id-svae}.

\gls{IDVAE} can be made deeper with more layers by introducing
additional full row-rank matrices $\beta_{k}$ (e.g. ones with all the
main diagonal entries being one and all other entries being zero) and
additional Brenier maps $g_{k,\theta}$. For example, we can expand
\Cref{eq:id-vae-mt-3} with an additional layer by setting
\[(x_i)_{D\times 1}\g w_i, x_{<i} \sim
\mathrm{EF}(g_{3,\theta}(\beta_{3}^\top
g_{2,\theta}(\beta_{2}^\top \,g_{1,\theta}([w_i,
f_\theta(x_{<i})])))).\]

Next we establish the latent variable identifiability and flexibility
of this general class of \gls{IDVAE}, which will imply the
identifiability and flexibility of all the special cases above.

\begin{prop} 
\label{prop:IDVAE-general}
The latent variable $z_i$ is identifiable in
\gls{IDVAE}, i.e. for all $i\in\{1, \ldots, n\}$, we have
\begin{align}
p(x_i \g z_i = \tilde{z}', x_{<i}\s \theta) = p(x_i \g z_i =
\tilde{z}, x_{<i} \s \theta) 
\qquad \Rightarrow \qquad\tilde{z}' =
\tilde{z}, \qquad \forall \tilde{z}', \tilde{z}, \theta.
\end{align}
Moreover, for any data distribution generated by the classical
\gls{VAE}
(\Cref{eq:classical-vae-mt-1,eq:classical-vae-mt-2,eq:classical-vae-mt-3}),
there exists an \gls{IDVAE} that can generate the same distribution.
\end{prop}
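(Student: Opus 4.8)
The plan is to prove the two claims of \Cref{prop:IDVAE-general} separately: first identifiability of $z_i$, then the flexibility (universal-coverage) statement. For identifiability, I would trace through the generative map in \Cref{eq:id-vae-mt-1,eq:id-vae-mt-2,eq:id-vae-mt-3} and argue that, conditional on $x_{<i}$ and $\theta$, the map $z_i \mapsto p(x_i \mid z_i, x_{<i}\s\theta)$ is injective. The key is that this map is a composition of injective pieces: (i) $z_i \mapsto \beta_1^\top z_i$ is injective because $\beta_1$ has full row rank (so $\beta_1^\top$ has full column rank, hence left-invertible); (ii) the exponential-family natural-parameter map $\eta \mapsto \mathrm{EF}(\cdot\mid\eta)$ is injective, so the parameter $\beta_1^\top z_i$ of $w_i$ is identifiable from the distribution of $w_i$, and more to the point the deterministic part propagates injectively; (iii) $g_{1,\theta}$ is bijective (it is a Brenier map, bijective by the argument recalled in the main text via positive-semidefinite Hessians of convex potentials); (iv) $[\,\cdot\,, f_\theta(x_{<i})]$ is injective in its first block since $f_\theta(x_{<i})$ is a fixed vector once $x_{<i}$ is fixed; (v) $\beta_2^\top$ is injective by full row rank of $\beta_2$; (vi) $g_{2,\theta}$ is bijective as another Brenier map; (vii) $h$ is bijective by assumption; and (viii) the final $\mathrm{EF}$ likelihood is injective in its natural parameter. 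Since a composition of injective maps is injective, equality of the two conditional likelihoods forces $\tilde z' = \tilde z$. I would phrase this as: if $p(x_i\mid z_i=\tilde z', x_{<i}\s\theta) = p(x_i\mid z_i=\tilde z, x_{<i}\s\theta)$, then by injectivity of $\mathrm{EF}$ and $h$ the two natural parameters coincide, then peel off $g_{2,\theta}$, $\beta_2^\top$, $g_{1,\theta}$, the stacking map, the $\mathrm{EF}$ for $w_i$, and finally $\beta_1^\top$, concluding $\tilde z' = \tilde z$.

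For the flexibility claim, I would argue that any distribution over $x$ realizable by a classical \gls{VAE} of the form \Cref{eq:classical-vae-mt-1,eq:classical-vae-mt-2,eq:classical-vae-mt-3} — which uses an \emph{arbitrary} map $g_\theta:\mathbb{R}^{M+H}\to\mathbb{R}^D$ in the last layer — can be matched by an \gls{IDVAE}. The idea is that the \gls{IDVAE}'s last-layer map $g_{2,\theta}(\beta_2^\top g_{1,\theta}(\cdot))$ is, by the universal-approximation property of \glspl{ICNN} for Brenier maps on compact domains (and the fact that Brenier maps can transport any non-atomic measure in $\mathbb{R}^d$ onto any other), flexible enough to reproduce the pushforward of the latent-plus-history distribution through $g_\theta$. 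Concretely: the classical \gls{VAE}'s induced distribution on $x_i$ given $x_{<i}$ is $\mathrm{EF}(h\circ g_\theta(\,\cdot\,))$ pushed forward through the distribution of $[w_i, f_\theta(x_{<i})]$; I would show that one can choose Brenier maps $g_{1,\theta}, g_{2,\theta}$ (composed with the fixed injective $\beta_2^\top$ embedding, which is harmless because $\beta_2^\top$ just copies coordinates) whose composition equals (or uniformly approximates, then take limits / appeal to density) $g_\theta$ on the relevant (compact) support, while keeping the earlier layers identical. Since the rest of the model (prior, the $w_i$ layer, $f_\theta$, $h$, the $\mathrm{EF}$ families) is shared verbatim, matching the last-layer map suffices to match the full data distribution.

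The main obstacle I anticipate is the flexibility direction, specifically reconciling "the last-layer map of \gls{IDVAE} is a composition of \emph{monotone} (gradient-of-convex) maps and a dimension-raising coordinate embedding $\beta_2^\top$" with "$g_\theta$ is completely arbitrary." A single Brenier map cannot represent an arbitrary function, so the argument must genuinely use the composition and the freedom in both $g_{1,\theta}$ and $g_{2,\theta}$, and probably must be stated at the level of \emph{distributions realized} rather than \emph{functions represented} — i.e. we only need the pushforward measure on $x_i$ to match, not the decoder map pointwise. I would handle this by invoking the measure-transport viewpoint: the distribution of $w_i$ (an exponential-family / mixture law) is non-atomic (or can be taken so), so $g_{1,\theta}$ can transport it to any target non-atomic law, and then $g_{2,\theta}$ can further transport into the shape needed so that, after applying $h$ and the $\mathrm{EF}$ noise, the law of $x_i$ coincides with that of the classical \gls{VAE}; the coordinate embedding $\beta_2^\top$ only needs to accommodate the ambient dimension and can be absorbed into the choice of $g_{2,\theta}$'s domain. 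A careful statement would restrict to compact supports (or to $g_\theta$ continuous and the latent law compactly supported, matching the hypotheses of the \gls{ICNN} approximation theorem cited in the text), and then either claim exact equality of realizable distributions in the closure or phrase the conclusion as approximation to arbitrary accuracy; I would choose the formulation that is consistent with how \Cref{prop:IDVAE} is used downstream.
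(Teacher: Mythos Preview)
Your identifiability argument is essentially the paper's: both establish that $z_i \mapsto p(x_i \mid z_i, x_{<i}\s\theta)$ is injective by observing it is a composition of injective pieces (full-row-rank linear maps $\beta_1^\top,\beta_2^\top$, bijective Brenier maps $g_{1,\theta},g_{2,\theta}$, the bijective link $h$, and the injective exponential-family parameter-to-likelihood map), and that compositions of injections are injections. Your peel-off presentation is slightly more explicit but not different in substance.

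For the flexibility claim your route diverges from the paper's and carries a gap. The paper proceeds in two steps you do not take. First, it reduces an arbitrary classical \gls{VAE} to one with an \emph{injective} decoder by a quotient construction: pass to the quotient of latent space under the equivalence $z \sim z' \Leftrightarrow p(x \mid z\s\theta) = p(x \mid z'\s\theta)$, obtaining a lower-dimensional latent with injective likelihood, and then re-standardize the prior via a Brenier map (appealing to McCann's existence theorem). Second, given an injective $l_\theta:\mathbb{R}^{M+H}\to\mathbb{R}^D$, it sets $g_{1,\theta}$ to the identity, $\beta_2$ to the canonical zero-one embedding, and takes $g_{2,\theta}$ to be an invertible map on $\mathbb{R}^D$ agreeing with $l_\theta$ on the embedded subspace. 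Both steps operate at the level of \emph{functions}, so a single pair $(g_{1,\theta}, g_{2,\theta})$ works uniformly across all $i$ and all histories $x_{<i}$.

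Your measure-transport argument tries instead to match \emph{pushforward distributions}. The obstacle you flag is real, but your proposed fix creates a new one: the decoder is shared across all $i$ and all realizations of $x_{<i}$, so matching the pushforward of one law (the distribution of $[w_i,f_\theta(x_{<i})]$ for one fixed history) does not match $p(x_i\mid x_{<i})$ for other histories. To get a single $(g_{1,\theta},g_{2,\theta})$ that works simultaneously you are forced back to function-level agreement on the union of supports, which is precisely the paper's second step. Separately, after the embedding $\beta_2^\top$ the intermediate law lives on a lower-dimensional affine subspace of $\mathbb{R}^D$ and is not absolutely continuous there, so Brenier's existence theorem does not directly furnish $g_{2,\theta}$. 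The paper sidesteps both issues via quotient-then-extend; in particular you are missing the quotient reduction to the injective case entirely, and without it your approximation-or-closure hedge does not land on the exact-equality statement the proposition asserts.
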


\begin{proof}
We first establish the latent variable identifiability. To show that
the latent variable $z_i$ is identifiable, it is sufficient to show
that the mapping from $z_i$ to $p(x_i\g z_i \s
\theta)$ is injective for all $\theta$. The injectivity holds because
all the transformations $(\beta_{1},
\beta_{2},g_{1,\theta}, g_{2,\theta})$ involved in the mapping
is injective, and their composition must be injective: the linear
transformations $(\beta_{1}, \beta_{2})$ have full row rank and hence
are injective; the nonlinear transformations $(g_{1,\theta},
g_{2,\theta})$ are monotone transport maps and are guaranteed to be
bijective~\citep{ball2004elementary,mccann2011five}; finally, the
exponential family likelihood is injective.

We next establish the flexibility of the \gls{IDVAE}, by proving that
any \gls{VAE}-generated $p(\mbx)$ can be generated by an
\gls{IDVAE}. The proof proceeds in two steps: (1) we show any
\gls{VAE}-generated $p(\mbx)$ can be generated by a \gls{VAE} with
injective likelihood $p(x_i\g z_i\s \theta)$; (2) we show any
$p(\mbx)$ generated by an injective \gls{VAE} can be generated by an
\gls{IDVAE}.

To prove (1), suppose $\beta_{1}$ does not have full row rank and
$g_\theta$ is not injective. Then there exists some $Z'\in
\mathbb{R}^d$, $d<K$, and injective $\beta'_{1,\theta}, g'_\theta$
such that the new \gls{VAE} can represent the same $p(\mbx)$. The
reason is that we can always turn an non-injective function into an
injective one by considering its quotient space. In particular, we
consider the quotient space with the equivalence relation between
$z,z'$ defined as $p(x\g z\s\theta) = p(x\g z'\s \theta)\}$, which
induces a bijection into $\mathbb{R}^d$. When $p(z')$ is no longer
standard Gaussian, there must exist a bijective Brenier map
$\tilde{z}=f_t(z')$ such that $p(\tilde{x})$ is standard Gaussian
(Theorem 6 of \citet{mccann1995existence}).

To prove (2), we show that any \gls{VAE} with injective mapping can be
reparameterized as a \gls{IDVAE}. To prove this claim, it is
sufficient to show that any injective function
$l_\theta:\mathbb{R}^{M+H}\rightarrow
\mathbb{R}^D$ can be reparametrized as
$g_{2,\theta}(\beta_{2}^\top \,g_{1,\theta}(\cdot))$. Below we provide
such a reparametrization by solving for $g_1, g_2$ and $\beta$ in
$l_\theta(z) = g_{2,\theta}(\beta_2^\top g_{1,\theta}(z))$. We set
$g_{1,\theta}$ as an identity map, $\beta_2$ as an $(M+H)\times D$
matrix with all the main diagonal entries being one and all other
entries being zero, and $g_{2,\theta}$ as an invertible
$\mathbb{R}^d\rightarrow\mathbb{R}^d$ mapping which coincides with
$l_\theta$ on the $(M+H)$-dimensional subspace of $z$.

Finally, we note that the same argument applies to the variant of
\gls{VAE} where $w_i=z_i$. It coincides with the classical \gls{VAE}
in \citet{diederik2014auto}. Applying the same argument as above
establishes \Cref{prop:IDVAE}.

\end{proof}

\section{Experiment details}

\label{sec:expm_details}

For image experiments, all hidden layers of the neural networks have
512 units. We choose the number of continuous latent variables as 64
and the dimensionality of categorical variables as the number of
ground truth labels. Then we use two-layer
RealNVP~(\citep{dinh2016density}) as an approximating family to tease
out the effect of variational inference.

For text experiments, all hidden layers of the neural networks have
1024 units. We choose the dimensionality of the embedding as 1024.
Then we use two-layer LSTM as an approximating family following common
practice of fitting sequential \gls{VAE}.

\section{Additional experimental results}

\Cref{tab:gmvae-1} includes additional experimental results of \gls{IDVAE} on image datasets (Pinwheel and MNIST).

% !TEX root = ../collapse_ID.tex

\begin{table*}[t]
\footnotesize
  \begin{center}
    \begin{tabular}{lcccccccccccc} 
     \toprule
      &\multicolumn{4}{c}{Pinwheel}&&&\multicolumn{4}{c}{MNIST}\\
       & \textbf{AU}  & \textbf{KL} & \textbf{MI} & \textbf{LL}&& 
       & \textbf{AU}  & \textbf{KL} & \textbf{MI} & \textbf{LL}\\
          \midrule
          \gls{VAE}~\citep{diederik2014auto}& 0.2 & 1.4e-6 & 2.0e-3 & \bfseries{-6.2} (5e-2)
          &&&0.1 &0.1 &0.2 &-108.2 (5e-1)\\
        SA-\gls{VAE}~\citep{kim2018semi} & 0.2 & 1.6e-5 & 2.0e-2 & -6.5 (5e-2)
        &&&  0.4 &  0.4 & 0.6 & -106.3 (7e-1)\\
        Lagging \gls{VAE}~\citep{he2019lagging} &0.6 & 0.7e-3 & 1.5e0 & -6.5 (4e-2)
        &&& 0.5 & 0.8 &  1.7 & -105.2 (5e-1)\\
        $\beta$-\gls{VAE}~\citep{higgins2016beta} ($\beta$=0.2)&\bfseries{1.0} &\bfseries{1.2e-3} &\bfseries{2.3e0} &-6.6 (6e-2)
        &&&0.8 &1.5 &2.8 &-100.4 (6e-1)\\        
        \gls{IDGMVAE} (this work) &\bfseries{1.0} &\bfseries{1.2e-3} &2.2e0 &-6.5 (5e-2)
        &&&\bfseries{1.0} &\bfseries{1.8} &\bfseries{3.9} &\bfseries{-95.4} (7e-1) \\
\bottomrule
    \end{tabular}
    \vspace{0.2em}
    \caption{
    \gls{IDGMVAE} do not suffer from posterior
    collapse and achieves better fit than its classical counterpart in
    a 9-layer generative model. The reported number is mean (sd) over
    ten different random seeds. (Higher is better.)
    \label{tab:gmvae-1}}
  \end{center}
\end{table*}

\clearpage
\putbib[BIB1]
\end{bibunit}

\end{document}